\documentclass{article}

\usepackage{fullpage}
\usepackage{cite}

\usepackage{booktabs}       
\usepackage{amsfonts}      
\usepackage{nicefrac}       
\usepackage{microtype}

\usepackage{amsmath,amssymb,amscd,amstext,amsthm,amsfonts}
\usepackage{latexsym,float,graphics,color,epsfig,euscript,subfigure,wrapfig,ifthen}
\usepackage{extarrows}
\usepackage{stmaryrd}

\usepackage{romannum}
 
\usepackage{paralist,graphicx}
\usepackage{cite,url,hyperref}
\usepackage{tikz,pgfplots}
\usetikzlibrary{calc,matrix,arrows,angles,patterns, quotes,intersections, 3d}

\usepackage{relsize}

\theoremstyle{plain}
\newtheorem{theorem}{Theorem}[section]

\newtheorem{proposition}[theorem]{Proposition}
\newtheorem{corollary}[theorem]{Corollary}

\theoremstyle{definition}
\newtheorem{definition}[theorem]{Definition}
\newtheorem{example}[theorem]{Example}
\newtheorem{algorithm}[theorem]{Algorithm}

\theoremstyle{remark}
\newtheorem{remark}{Remark}

%%%%%%%%%%%%

\newcommand{\mbbN}{{\mathbb N}}

\newcommand{\mbbR}{{\mathbb R}}

\newcommand{\mcalG}{{\mathcal G}}

\newcommand{\mbbRminmax}{\overline{\overline{\mbbR}}}

\newcommand{\va}{{\mathbf a}}
\newcommand{\vb}{{\mathbf b}}

\newcommand{\vu}{{\mathbf u}}
\newcommand{\vv}{{\mathbf v}}

\newcommand{\vx}{{\mathbf x}}
\newcommand{\vy}{{\mathbf y}}

\newcommand{\minplus}{\oplus}
\newcommand{\maxplus}{\boxplus}

\newcommand{\lowertconv}{\underline{\mathrm{tconv}}}
\newcommand{\uppertconv}{\overline{\mathrm{tconv}}}
\newcommand{\loweruppertconv}{\underline{\overline{\mathrm{tconv}}}}

\newcommand{\Span}{\mathrm{Span}}
\newcommand{\Expr}{\mathrm{Expr}}

\title{Min-Max-Plus Neural Networks}
\author{
  Ye Luo\footnote{This work is supported by National Natural Science Foundation of China (Grant No. 61875169).} \\
 School of Informatics\\
  Xiamen University\\
  Xiamen,  Fujian 361005, China \\
  \texttt{luoye@xmu.edu.cn, luoye80@gmail.com} \\
  \and
   Shiqing Fan \\
    School of Informatics\\
  Xiamen University\\
  Xiamen,  Fujian 361005, China \\
   \texttt{loy.fsq@gmail.com} \\
}

\begin{document}

\maketitle

\begin{abstract}
We present a new model of neural networks called Min-Max-Plus Neural Networks (MMP-NNs) based on operations in tropical arithmetic. In general, an MMP-NN is composed of three types of alternately stacked layers, namely  linear layers, min-plus layers and max-plus layers. Specifically, the latter two types of layers constitute the nonlinear part of the network which is trainable and more sophisticated compared to the nonlinear part of conventional neural networks. In addition, we show that with higher capability of nonlinearity expression, MMP-NNs are universal approximators of continuous functions, even when the number of multiplication operations is tremendously reduced (possibly to none in certain extreme cases). Furthermore,  we formulate the backpropagation algorithm in the training process of MMP-NNs and introduce an algorithm of normalization to improve the rate of convergence in training.
\end{abstract}

\section{Introduction}
\subsection{motivation}
Conventional artificial neural networks typically have a fixed nonlinear activation function that applies to all neurons. Introducing a trainable nonlinear part of the network usually further enhances fitting capability of the network. For example,  the seminal work of He \emph{et al.} \cite{HZRS2015} showed for the first time that human-level performance on ImageNet Classification (experimentally with an error rate of $5.1\%$) could be surpassed by the performance of a large scale deep neural network (experimentally with an error rate of $4.94\%$). A key ingredient of their work is to make an extension of the classical Rectified Linear Unit (ReLU) as the nonlinear activation function to Parametric Rectified Linear Unit (PReLU) in which the slope of the negative part of the input is learnable. 

In this paper, we propose a new framework of neural networks called Min-Max-Plus Neural Networks (MMP-NNs) whose nonlinear part is systematically complexified. The mathematical foundation of this model is called tropical mathematics \cite{MS15} which is a fast developing area in mathematics and whose connections to neural networks have been established only very recently. A special feature of tropical mathematics is that the operations of usual multiplications and additions degenerate to operations of additions (called ``tropical multiplications'') and min/max operations (called ``tropical additions'') respectively. Consequently, the nonlinear part of an MMP-NN only involves additions and min/max operations. 

Since the nonlinear part of an MMP-NN is trainable, the overall fitting capability of the model is determined by the fitting capabilities of both the linear and nonlinear parts of the network. In some cases, the network can be configured such that the majority of the fitting capability comes from the nonlinear part which only uses additions and min/max operations. As a result, the number of the multiplication operations can be tremendously reduced in the computation of such a network. In particular, using certain configurations, the number of multiplications can be even reduced to none while the network is still kept to be a universal approximator.

\subsection{Related work}
\subsubsection{Investigations of neural networks using tropical mathematics}
Our work is closely related to the applications of tropical mathematics to analyze deep neural networks. 

At present, many studies have analyzed the mechanism of deep neural networks from different perspectives, such as the advantages of deep-structured networks over shallow-structured networks \cite {DB2011,Bengio2011,Eldan2016}, the impact of activation functions on network expression capabilities \cite {Montufar2014},  explanation of  the generalization capabilities of networks \cite {ZBHRV17}, etc. The methods of analogy to neural networks in order to explain the characteristics of them, for example, using tropical geometry to simulate the structure of deep neural network, are also been proposed.

Zhang \emph{et al.} \cite {ZNL2018} proposed to use the theory of tropical geometry to analyze deep neural networks in order to explore the explanation of the characteristics of the neural network structure. In particular, they established a connection between a forward neural network with ReLU activation and tropical geometry for the first time, proving that this kind of neural networks is equivalent to the family of tropical rational mappings. They also deduced that a hidden layer of a forward ReLU neural network can be described by zonotopes as the building blocks of a deeper network, and associating the decision boundary of this neural network with a tropical hypersurface. 

In order to expand the studies of \cite {Montufar2014} about the upper bounds on linear regions of layers from ReLU activations,  leaky ReLU activations and maxout activations, \cite{CM2018} present an approach in a tropical perspective which treats neural network layers with piecewise linear activations as tropical polynomials, which are polynomials over tropical semirings.

Calafiore \emph{et al.} \cite{CGP2019} proposed a new type of neural networks called log-sum-exp (LSE) networks which are universal approximators of convex functions. In addition, they show that  difference-LSE networks are   a smooth universal approximators of continuous functions over compact convex sets \cite{CGP2020}. LSE networks and difference-LSE networks  are also closely related to tropical mathematics via the so-called ``dequantization'' procedure. 

There are also studies to use the relevant content of tropical geometry to solve the problem of optimal piecewise linear regression. Maragos \emph{et al.} \cite{MT2019} generalized tropical geometrical objects using weighted lattices and provided the optimal solution of max-$\star$ equations (max-$\star$ algebra with an arbitrary binary operation $\star$ that distributes over max) using morphological adjunctions that are projections on weighted lattices. Then by fitting max-$\star$ tropical curves and surfaces to arbitrary data that constitute polygonal or polyhedral shape approximations, the relationship between tropical geometry and optimization of piecewise-linear regression can be established. With this theory, they again proposed an approach for multivariate convex regression by using an approximation model of the maximum of hyperplanes represented as a multivariate max-plus tropical polynomial \cite{MT2020}, and show that the method has lower complexity than most other methods for fitting piecewise-linear (PWL) functions.

Newer tropical methods are also developed to formally simulate the training of some neural networks. Smyrnis \emph{et al.} \cite{SM2019} \cite{SMR2020} emulate the division of regular polynomials, when applied to those of the max-plus semiring from the aspect of tropical polynomial division. This is done via the approximation of the Newton polytope of the dividend polynomial by that of the divisor.  The process has been applied  to minimize a two-layer fully connected network for a binary classification problem, and then evaluated in various experiments to prove its ability to approximate the network with the least performance loss. Due to them encoding the totality of the information contained in the network, the results helped them to demonstrate that the Newton polytopes of the tropical polynomials corresponding to the network provide a reliable way to approximate its labels.

\subsubsection{Investigations on reducing multiplication operations in neural networks}
For a deep neural network (DNN), the computation is majorly on multiplications between floating-point weights and floating-point value activation during forward inference. It is well known that the execution of multiplication is typically slower than the execution of addition with higher energy consumption. In recent years, there are many studies on how to trade multiplication and addition to speed up the computation in deep learning.

The seminal work \cite{CBD2015} introduced BinaryConnect, a method training a DNN with binary (e.g. -1 or 1) weights during the forward and backward propagations, so that many multiply-accumulate operations can be replaced by simple accumulations. Moreover, Hubara \emph{et al.} \cite{HCSEB2016} proposed BNNs, which binarized not only weights but also activations in convolutional neural networks at runtime. In order to get greatly improved performance, Rastegari \emph{et al.} \cite{RORF2016} introduced XNOR-Networks, in which both the filters and the input to convolutional layers are binary. XNOR-Nets offer the possibility of running state-of-the-art networks on CPUs in real-time. To further increase the speed of traning binarized networks, \cite{ZWNZWZ2016} propose DoReFa-Net to train convolutional neural networks that have low bitwidth weights and activations using low bitwidth parameter gradients.

Considered from another aspect, Chen \emph{et al.} \cite {CWCSXTX2019} boldly gave up the convolution operation that involve a lot of matrix multiplication. They propose Adder Networks that maximize the use of addition while abandoning convolution operations, specifically, by taking the $L_1$-norm distance between filters and input feature as the output response instead of the results of convolution operations. And the corresponding optimization method is developed by using regularized full-precision gradients. The experimental results show that AdderNets can well approximate the performance of CNNs with the same architectures, which may have some impact on future hardware design. 

\subsection{Our contributions}

\begin{enumerate}[(i)]
\item Using tropical arithmetic, we propose a new model of neural networks called Min-Max-Plus Neural Networks which have a trainable and more sophisticated nonlinear part compared to conventional neural networks. We show that a general form of MMP-NNs is composed of three types of layers: linear layers, min-plus layers and max-plus layers, which can be represented by matrices, min-plus matrices and max-plus matrices respectively. Then we show that the computation of MMP-NNs are essentially a sequence of matrix multiplications and tropical matrix multiplications.  
\item We show that such a model of MMP-NNs is quite general in the sense that conventional maxout networks, ReLU networks, leaky/parametric ReLU networks, and the dequantization of Log-Sum-Exp (LSE) networsk can all be considered as specializations of a special type (called Type I) of MMP-NNs,  while on the other hand, MMP-NNs can be quite non-conventional. A special type (called Type II) of MMP-NNs have only one linear layer at the input end with the remaining layers being the min-plus layers and max-plus layers which are stacked together alternately. Another special type (Type III) of MMP-NNs has a structure of Type II networks attached with an additional output linear layer. With a more sophisticated nonlinear expressor in a Type III network, it is expected to have a more enhanced fitting ability than similarly sized conventional networks. 
\item We show that MMP-NNs of all types are universal approximators. In particular, we show that the space of functions expressible by the Type II networks with a fixed linear layer can be elegantly characterized using tropical convexity. Moreover, the proof that we give to Type II networks being universal approximators is quite distinct from the regular difference-of-convex-functions approach. 
\item We show that by using Type II networks, since there is only one linear layer, the number of multiplications in the computation can be tremendously reduced. This gives an advantage of using Type II networks in scenarios where computing resources are limited. 
\item We show that MMP-NNs can also be trained using backpropagation as in the conventional feedforward networks. We provide formulas for gradient calculations for the non-conventional min-plus layers and max-plus layers. 
\item We propose a normalization process to adjust the parameters in the min-plus and max-plus layers which helps to expedite the rate of convergence in training MMP-NNs. This normalization process is a generalization of the Legendre-Fenchel transformation widely used in physics and convex optimization. 
\end{enumerate}

\subsection{Organization of the paper}
The remaining of the paper is organized as follows: In Section~\ref{S:EleTrop}, we give a brief overview of the terminologies in tropical mathematics related to this work; In Section~\ref{S:Description}, we give a general description of  the building blocks and architecture of MMP-NNs,  and discuss several special types (Type I, II, and III) of MMP-NNs; In Section~\ref{S:UniApprox}, we prove that all types of MMP-NNs are universal approximators; In Section~\ref{S:train}, we describe a training method of MMP-NNs using backpropagation and introduce the normalization algorithm. 

\section{Elements of Tropical Arithmetic} \label{S:EleTrop}
In this section, we provide some preliminaries of tropical mathematics that are related to our work in this paper. One may refer to \cite{MS15,Butkovivc10} for a more comprehensive introduction to tropical mathematics. 

\subsection{Tropical operations} \label{SS:TropOp}
\begin{definition}
Let $\mbbR_{\min} := \mbbR\bigcup \{\infty\}$, $\mbbR_{\max}:=\mbbR\bigcup\{-\infty\}$, $\mbbRminmax:=\mbbR\bigcup \{\infty,-\infty\}$. 
\begin{enumerate}[(i)]
\item The \emph{min-plus algebra} $(\mbbR_{\min},\odot,\minplus)$ and the \emph{max-plus} algebra $(\mbbR_{\max},\odot,\maxplus)$ are semirings where $a\odot b:=a+b$, $a\minplus b:=\min(a,b)$ and $a\maxplus b:=\max(a,b)$. 
\item The operations $\odot$, $\minplus$ and $\maxplus$ are called \emph{tropical multiplication}, \emph{tropical lower addition} and \emph{tropical upper addition} respectively. 
\item By convention, the \emph{tropical semiring} is defined as either the min-plus algebra $(\mbbR_{\min},\odot,\minplus)$ or the max-plus algebra $(\mbbR_{\max},\odot,\minplus)$.
\end{enumerate}
\end{definition}

Note that $a\odot 0 = a$, $b\minplus \infty = b$ and $c\maxplus (-\infty) = c$ for all $a\in \mbbRminmax$, $b\in \mbbR_{\min}$ and $c\in \mbbR_{\max}$. This means that $0$ is the \emph{tropical multiplicative identity} for both min-plus and max-plus algebras, $\infty$ is the \emph{identity for tropical lower addition} and $-\infty$ is the \emph{identity of tropical upper addition}.  Moreover,  the \emph{tropical division} of $a,b\in \mbbR$ is defined as $a \oslash b := a-b$ and the \emph{tropical inverse} of $a\in \mbbRminmax$ is the negation $-a = 0\oslash a$. Note that the min-plus algebra and the max-plus algebra are isomorphic under negation. 

The min-plus  algebra  (resp. max-plus algebra) is an idempotent semiring, since $a\minplus a  = a$ (resp. $a\maxplus a  = a$) for all $a\in\mbbRminmax$. It can be easily verified that the tropical operations on $(\mbbR_{\min},\odot,\minplus)$ and $(\mbbR_{\max},\odot,\maxplus)$  satisfy the usual principles of commutativity, associativity and distributivity.

\begin{remark} \label{R:minmaxplus}
While only one of the min-plus algebra and the max-plus algebra is considered in most other works related to the tropical semiring, we need to deal with both in this work. In particular, the following property will be employed: the two addition operations $\minplus$ and $\maxplus$ also mutually satisfy the distributive law, i.e.,  $a\maxplus(b\minplus c)=(a\maxplus b)\minplus (a\maxplus c)$ and $a\minplus(b\maxplus c)=(a\minplus b)\maxplus (a\minplus c)$ for all $a,b,c\in \mbbRminmax$. 
\end{remark}

Tropical operations can be also be applied to functions.  For a topological space $X$,  let $C(X)$ be the space of real-valued continuous functions on $X$. 
Correspondingly we can define tropical operations on $C(X)$ as follows:
\begin{enumerate}[(i)]
\item For $f,g\in C(X)$, the \emph{lower tropical addition} and \emph{upper tropical addition} are defined as $f\minplus g :=\min(f,g)$ and $f\maxplus g :=\max(f,g)$ respectively where the minimum and the maximum are taken in a point-wise manner. 
\item For $f,g\in C(X)$ and $c\in \mbbR$, the \emph{tropical scalar multiplication}, \emph{tropical multiplication}, \emph{tropical division} and \emph{tropical inverse} are defined  respectively as $c\odot f  :=c+f$, $f\otimes g:=f+g$,  $f\oslash g:=f-g$ and $-f=0\oslash f$. 
\item By abuse of notation, we also let $\infty$ and $-\infty$ denote constant functions on $X$ taking values $\infty$ and $-\infty$ respectively. Then for  $f\in C(X)$, we have $f\minplus \infty = f$, $f\maxplus (-\infty) = f$, $f\minplus (-\infty) = -\infty$,  $f\maxplus \infty = \infty$, $f\otimes \infty = \infty$ and $f\otimes (-\infty) = -\infty$. 
\item $(C(X)\bigcup\{\infty\},\minplus, \otimes)$ and $(C(X)\bigcup\{-\infty\},\maxplus, \otimes)$ are idempotent semirings with operations satisfying the usual principles of commutativity, associativity, distributivity, and the property $f\maxplus(f\minplus h)=(f\maxplus g)\minplus (f\maxplus h)$ and $f\minplus(g\maxplus h)=(f\minplus g)\maxplus (f\minplus h)$ for all $f,g,h \in C(X)\bigcup\{\infty,-\infty\}$. 
\end{enumerate}

\subsection{Tropical convexity} \label{SS:tconv} 

Now let us introduce the notion of tropical convexity \cite{DS04, Luo2018} for both min-plus algebra and max-plus algebra. 

\begin{definition}
A subset $T$ of $C(X)$ is said to be \emph{lower tropically convex} (respectively \emph{upper tropically convex}) if $a\odot f \minplus b\odot g$ (respectively $a\odot f \maxplus b\odot g$) is contained in $T$ for all $a,b\in \mbbR$ and all $f, g \in T$. 
\end{definition}

Let $f_1,\cdots,f_n\in C(X)$, $a_1,\cdots,a_n \in \mbbR_{\min}$ and $b_1,\cdots,b_n \in \mbbR_{\max}$. Then we say $a_1\odot f_1\minplus \cdots \minplus a_n\odot f_n$ is a \emph{min-plus combination} of $f_1,\cdots,f_n$ and $b_1\odot f_1\maxplus \cdots \minplus b_n\odot f_n$ is a \emph{max-plus combination} of $f_1,\cdots,f_n$. Here we also call both min-plus and max-plus combinations \emph{ tropical linear combinations}. 

\begin{definition}
Let $V$ be a subset of $C(X)$. The \emph{lower tropical convex hull} (respectively \emph{upper tropical convex hull}) of $V$ is the smallest lower (respectively upper) tropically convex subset of $C(X)$ containing $V$.
\end{definition}

We may characterize the tropical convex hulls as sets of tropical linear combinations as stated in the following proposition. (For more details, see Section~3 of \cite{Luo2018}). 

\begin{proposition} \label{P:TropConv}
For a subset $V$ of $C(X)$, the lower (respectively upper) tropical convex hull of $V$ is composed of all min-plus (respectively max-plus) linear combinations of elements in $V$, i.e., $\lowertconv(V)  = \{(a_1\odot f_1)\minplus\cdots\minplus (a_m\odot f_m)\mid m\in\mbbN, a_i\in \mbbR,f_i\in V\}$ and $\uppertconv(V)  = \{(b_1\odot f_1)\maxplus\cdots\maxplus (b_m\odot f_m)\mid m\in\mbbN, b_i\in \mbbR,f_i\in V\}$. 
\end{proposition}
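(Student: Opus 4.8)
The plan is to prove the two set inclusions separately, writing $S_{\min}$ for the candidate set $\{(a_1\odot f_1)\minplus\cdots\minplus (a_m\odot f_m)\mid m\in\mbbN, a_i\in \mbbR, f_i\in V\}$ appearing on the right-hand side of the first identity. Since $\lowertconv(V)$ is by definition the smallest lower tropically convex subset of $C(X)$ containing $V$, it suffices to show (a) that $S_{\min}$ is itself lower tropically convex and contains $V$, which forces $\lowertconv(V)\subseteq S_{\min}$ by minimality, and (b) that every element of $S_{\min}$ must lie in any lower tropically convex set containing $V$, which gives $S_{\min}\subseteq\lowertconv(V)$. The upper case then follows verbatim after replacing $\minplus$ by $\maxplus$, or by invoking the isomorphism under negation between the min-plus and max-plus algebras noted earlier.

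For inclusion (a), I would first observe that $V\subseteq S_{\min}$, since each $f\in V$ equals $0\odot f$, the one-term combination with $m=1$ and $a_1=0$. To check that $S_{\min}$ is closed under the defining operation, I take $a,b\in\mbbR$ together with two elements $f=(a_1\odot f_1)\minplus\cdots\minplus (a_m\odot f_m)$ and $g=(c_1\odot g_1)\minplus\cdots\minplus (c_k\odot g_k)$ of $S_{\min}$. The crucial tropical fact is the distributivity of $\odot$ over $\minplus$, namely $a\odot(u\minplus v)=(a\odot u)\minplus(a\odot v)$ (concretely $a+\min(u,v)=\min(a+u,a+v)$), together with associativity and commutativity of $\minplus$. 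Applying these lets me rewrite
\[
a\odot f\minplus b\odot g=\bigl((a+a_1)\odot f_1\bigr)\minplus\cdots\minplus\bigl((a+a_m)\odot f_m\bigr)\minplus\bigl((b+c_1)\odot g_1\bigr)\minplus\cdots\minplus\bigl((b+c_k)\odot g_k\bigr),
\]
which is again a min-plus combination of the $f_i$ and $g_j$ with finite coefficients, hence an element of $S_{\min}$. I would also note here that such a finite pointwise minimum of finitely many functions in $C(X)$ is itself continuous, so $S_{\min}\subseteq C(X)$; this is precisely where the restriction to finite coefficients $a_i\in\mbbR$ matters, since it keeps the combinations from degenerating to the constant function $\infty$.

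For inclusion (b), let $T$ be any lower tropically convex set with $V\subseteq T$, and argue by induction on the number of terms $m$ that every min-plus combination lies in $T$. For the base case $m=1$, setting the two points equal in the convexity condition gives $a\odot f\minplus b\odot f=(a\minplus b)\odot f$, so choosing $a=b=c$ shows $c\odot f\in T$ for every $c\in\mbbR$ and every $f\in T$, in particular for $f\in V$. For the inductive step I would split an $m$-term combination as $h'\minplus(a_m\odot f_m)$, where $h'$ is an $(m-1)$-term combination lying in $T$ by the inductive hypothesis and $a_m\odot f_m\in T$ by the base case, and then write $h'\minplus(a_m\odot f_m)=(0\odot h')\minplus\bigl(0\odot(a_m\odot f_m)\bigr)$ to exhibit it as a legitimate lower tropical convex combination of two elements of $T$. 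Taking $T=\lowertconv(V)$ completes this direction. I do not expect a genuine obstacle: the only place requiring tropical-specific care is the distributivity computation in step (a), and the only subtlety worth flagging explicitly is verifying that all combinations remain in $C(X)$ rather than escaping to $\pm\infty$.
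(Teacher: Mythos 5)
Your proof is correct. The paper itself does not prove Proposition~\ref{P:TropConv} but defers to Section~3 of \cite{Luo2018}; your double-inclusion argument (showing the set of finite min-plus combinations is lower tropically convex and contains $V$, then showing by induction on the number of terms that every such combination lies in any lower tropically convex set containing $V$) is the standard and expected route, and your attention to the finiteness of the coefficients $a_i\in\mbbR$ and to continuity of finite pointwise minima is exactly the right care to take.
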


\subsection{Tropical polynomials and posynomials}
A \emph{polynomial} in $n$ variables with coefficients in $\mbbR$ has an expression of the form $p(\vx) = \sum\limits_{i=1}^{m}c_ix_1^{a_{i1}}\cdots x_n^{a_{in}}$ with $\vx = (x_1,\cdots,x_n)$, $c_i\in\mbbR$ and $a_{ij}\in \mbbN$. The terms $c_ix_1^{a_{i1}}\cdots x_n^{a_{in}}$ are called \emph{monomials}. Polynomials and monomials can be considered as functions on $\mbbR^n$. If in addition the coefficients $c_i$ and the coordinates $x_i$ are required to be positive reals while the assumption on the exponents  is relaxed such that $a_{ij}$ are allowed to be any real numbers, then $p(\vx)$ is a called a  \emph{posynomial}.

\begin{definition}
Using tropical operations, the  tropical counterparts of monomials, polynomials, and posynomials are defined as follows:
\begin{enumerate}[(i)]
\item A \emph{tropical monomial} in $n$ variables is a function of $\vx=(x_1,\cdots,x_n)\in \mbbR^n$ of the form $c+\langle\va,\vx\rangle$ with $c\in \mbbR$ and $\va=(a_1,\cdots,a_n)\in\mbbR^n$. Here $\langle\va,\vx\rangle=a_1x_1+\cdots+a_nx_n$ is the inner product of the vectors $\va$ and $\vx$. 
\item A \emph{min-plus polynomial} (respectively a \emph{max-plus polynomial}) in $n$ variables is a function on $\mbbR^n$ of the form $p(\vx) =\sum_{i=1,\cdots,m}^{\minplus}c_i\odot x_1^{\otimes a_{i1}}\cdots x_n^{\otimes a_{in}} =\min_{i=1,\cdots,m}\left(c_i+\langle\va_i,\vx\rangle\right)$ (respectively $p(\vx) =\sum_{i=1,\cdots,m}^{\maxplus}c_i\odot x_1^{\otimes a_{i1}}\cdots x_n^{\otimes a_{in}} =\max_{i=1,\cdots,m}\left(c_i+\langle\va_i,\vx\rangle\right)$) where $c_i\in\mbbR$, $\vx = (x_1,\cdots,x_n) \in \mbbR^n$, $\va_i = (a_{i1},\cdots,a_{in}) \in \mbbN^n$ and $\langle\va_i,\vx\rangle=a_{i1}x_1+\cdots +a_{in}x_n$ is the inner product of the vectors $\va_i$ and $\vx$. 
\item If instead of requiring $\va\in\mbbN^n$ we allow $\va\in\mbbR^n$, then the function $\min_{i=1,\cdots,m}\left(c_i+\langle\va_i,\vx\rangle\right)$ is called a \emph{min-plus posynomial} and the function $\max_{i=1,\cdots,m}\left(c_i+\langle\va_i,\vx\rangle\right)$ is called a \emph{max-plus posynomial}. 

\end{enumerate}
\end{definition}

\begin{remark}
In the context of this paper, the coefficient vector $\va=(a_1,\cdots,,a_n)$ of a tropical monomial $c+\langle\va,\vx\rangle$ is not necessarily restricted to be a vector of integers  (as in many other works) and our discussions will be mainly focused on tropical posynomials rather than tropical polynomials. By the above definitions, tropical monomials are simply affine functions, min-plus (respectively max-plus) posynomials are precisely convex-upward (respectively convex-downward) piecewise-linear functions. %{\color{red} reference}
\end{remark}

\subsection{Tropical matrix algebra}
Using tropical operations, we can also define additions and products of tropical matrices. 
A \emph{min-plus matrix} is a matrix $(a_{ij})_{ij}$ with entries $a_{ij}\in \mbbR_{\min}$ and a \emph{max-plus matrix} is a matrix $(a_{ij})_{ij}$ with entries $a_{ij}\in \mbbR_{\max}$. Then we may denote the space of $m\times n$ min-plus matrices by $\mbbR_{\min}^{m\times n}$ and the space of $m\times n$ max-plus matrices by $\mbbR_{\max}^{m\times n}$.

\begin{definition}
Consider min-plus matrices $A = (a_{ij})\in \mbbR_{\min}^{m\times n}$, $B = (b_{ij})\in \mbbR_{\min}^{m\times n}$ and $C = (c_{ij})\in \mbbR_{\min}^{n\times p}$, and max-plus matrices $A' = (a'_{ij})\in \mbbR_{\max}^{m\times n}$, $B' = (b'_{ij})\in \mbbR_{\min}^{m\times n}$ and $C' = (c'_{ij})\in \mbbR_{\min}^{n\times p}$. 
\begin{enumerate}[(i)]
\item The \emph{min-plus sum} $S = A \minplus B$ is defined to be the $m\times n$ min-plus matrix $S=(s_{ij})_{ij} $ with entries $s_{ij}= a_{ij} \minplus b_{ij}=\min(a_{ij},b_{ij})$ for $i=1,\cdots,m$ and $j=1,\cdots, n$. 
\item The \emph{max-plus sum} $S' = A' \maxplus B'$ is defined to be the $m\times n$ max-plus matrix $S'=(s'_{ij})_{ij} $ with entries $s'_{ij}= a'_{ij} \maxplus b'_{ij}=\max(a'_{ij},b'_{ij})$ for $i=1,\cdots,m$ and $j=1,\cdots, n$. 
\item The \emph{min-plus matrix product} $T=A\underline{\otimes}C$ is defined to be the $m\times p$ min-plus matrix $T=(t_{ij})_{ij} $ with entries
$t_{ij} = a_{i1}\odot b_{1j} \minplus \cdots \minplus a_{in}\odot b_{nj}=\min_{k=1,\cdots,n}(a_{ik}+b_{kj})$ for $i=1,\cdots,m$ and $j=1,\cdots, p$. 

\item The  \emph{max-plus matrix product} $T'=A'\overline{\otimes}C'$ is defined to be the $m\times p$ max-plus matrix $T'=(t'_{ij})_{ij} $ with entries
$t'_{ij} = a'_{i1}\odot b'_{1j} \maxplus \cdots \maxplus a'_{in}\odot b'_{nj}=\max_{k=1,\cdots,n}(a'_{ik}+b'_{kj})$ for $i=1,\cdots,m$ and $j=1,\cdots, p$. 
\item Let $\underline{I}_n = \left(\begin{array}{ccccc}0 & \infty & \infty & \cdots & \infty \\  \infty  &0 & \infty  & \cdots & \infty \\ \infty & \infty &0  & \cdots & \infty \\ \vdots & \vdots &\vdots &\ddots &\vdots  \\ \infty & \infty & \infty & \cdots & 0 \end{array}\right)  \in \mbbR_{\min}^{n\times n}$ and  $\overline{I}_n = \left(\begin{array}{ccccc}0 & -\infty & -\infty & \cdots & -\infty \\  -\infty  &0 & -\infty  & \cdots & -\infty \\ -\infty & -\infty &0  & \cdots & -\infty \\ \vdots & \vdots &\vdots &\ddots &\vdots  \\ -\infty & -\infty & -\infty & \cdots & 0 \end{array}\right)  \in \mbbR_{\max}^{n\times n}$. Then $\underline{I}_n$ is called the $n\times n$ \emph{min-plus identity matrix} and $\overline{I}_n$ is called the $n\times n$ \emph{max-plus identity matrix}, since $\underline{I}_n\underline{\otimes} A = A \underline{\otimes}  \underline{I}_n=A$ for all $A\in \mbbR_{\min}^{n\times n}$ and $\overline{I}_n\overline{\otimes} B = B \overline{\otimes}  \overline{I}_n = B$ for all $B\in \mbbR_{\max}^{n\times n}$. 
\end{enumerate}
\end{definition}

\begin{example}
Let $A = \left(\begin{array}{cc}7 & 2 \\ 0  & -1 \\ 3 & 4\end{array}\right)$, $B =\left(\begin{array}{cc}5 & 3 \\ 6  & 2 \end{array}\right)$. Then it can be easily verified that $A\underline{\otimes}B = \left(\begin{array}{cc}8 & 4 \\ 5  & 1 \\ 8 & 6\end{array}\right)$ and $A\overline{\otimes}B = \left(\begin{array}{cc}12 & 10 \\ 5  & 3 \\ 10 & 6\end{array}\right)$.
\end{example}

\begin{remark} \label{R:Min-Min-Transform}
Let $A \in \mbbR_{\min}^{m\times n}$, $A' \in \mbbR_{\max}^{m\times n}$ and $B \in \mbbR^{n\times p}$. Suppose each row vector in $A$ contains at least one  entry  not being $\infty$ and each row vector in $A'$ contains at least one entry not being $-\infty$.  Then  $A\underline{\otimes} B\in \mbbR^{m\times p}$ and $A'\overline{\otimes} B\in \mbbR^{m\times p}$. By letting $p=1$, this actually means both $A$ and $A'$ can be considered as ``tropical linear'' transformations (called \emph{min-plus transformation} and \emph{max-plus transformation} respectively) from $\mbbR^n$ to $\mbbR^m$.
\end{remark}

\section{Architecture of Min-Max-Plus Networks} \label{S:Description}
In this section, we give a description of  the building blocks and general form of MMP-NNs,  and discuss in addition several special types (Type I, II, and III) of MMP-NNs.

\subsection{Building blocks: linear layers, min-plus layers and max-plus layers}

In general, an Min-Max-Plus Neural Network (MMP-NN) is composed of three types of layers: linear layers, min-plus layers and max-plus layers. As shown in Figure~\ref{F:Linear-Min-Max}, a linear layer is represented by a linear transformation, a min-plus layer is represented by a min-plus transformation and a max-plus layer is represented by a max-plus  transformation (Remark~\ref{R:Min-Min-Transform}). 

More precisely, suppose the input of a linear layer, a min-plus player or a max-plus layer is an $n$-dimensional vector in $\mbbR^n$ and the output is an $m$-dimensional vector  in $\mbbR^m$. Then the corresponding linear transformation $\lambda: \mbbR^n\to \mbbR^m$, min-plus transformation $\alpha: \mbbR^n\to \mbbR^m$ and max-plus transformation $\beta: \mbbR^n\to \mbbR^m$ can be expressed as $\rho(\vx) = L\cdot \vx$, $\alpha(\vx) = A\underline{\otimes} \vx$ and $\beta(\vx) = B\overline{\otimes} \vx$ respectively where $\vx\in \mbbR^n$, $L\in \mbbR^{m\times n}$, $A\in\mbbR_{\min}^{m\times n}$ and $B\in \mbbR_{\max}^{m\times n}$. Since the output should be in $\mbbR^m$, we require that each row vector in $A$ contains at least one entry not being $\infty$ and each row vector in $B$ contains at least one entry not being $-\infty$ (Remark~\ref{R:Min-Min-Transform}). Figure~\ref{F:Linear-Min-Max} shows an example of a linear layer, a min-layer and a max-layer, all for $m=3$ and $n=2$. 

\begin{remark}
It should be emphasized that even though min-plus transformations and max-plus transformations are ``tropically linear'', in general they are nonlinear in the usual sense, i.e., piecewise linear but convex upward or downward. 
\end{remark}

\begin{figure}[tbp] 
\centering

\begin{tikzpicture}[>=to,x=1cm,y=0.5cm, scale=1.5]

\node[draw, circle] (x1) at (0,8) {$x_1$};
\node[draw, circle] (x2) at (0,6) {$x_2$};
\node[draw, circle] (y1) at (1.8,9) {$y_1$};
\node[draw, circle] (y2) at (1.8,7) {$y_2$};
\node[draw, circle] (y3) at (1.8,5) {$y_3$};

\begin{scope}[line width=1.6pt, every node/.style={sloped,allow upside down}]
  \draw (0.9,10) node[anchor=south] {(a) Linear Layer};
  \draw[->,line width=1pt] (x1) edge (y1) (x1) edge (y2) (x1) edge (y3);
  \draw[->,line width=1pt] (x2) edge (y1) (x2) edge (y2) (x2) edge (y3);
  \draw (0.9,3) node[font=\large, align=center, scale=0.8] {$\left(\begin{array}{l}y_{1} \\ y_{2} \\ y_{3}\end{array}\right)=\left(\begin{array}{ll}w_{11} & w_{12} \\ w_{21} & w_{22} \\ w_{31} & w_{32}\end{array}\right)\cdot\left(\begin{array}{l}x_{1} \\ x_{2}\end{array}\right)$};
\end{scope}

\node[draw, circle] (x3) at (4,8) {$x'_1$};
\node[draw, circle] (x4) at (4,6) {$x'_2$};
\node[draw, circle] (y4) at (5.8,9) {$y'_1$};
\node[draw, circle] (y5) at (5.8,7) {$y'_2$};
\node[draw, circle] (y6) at (5.8,5) {$y'_3$};

\begin{scope}[line width=1.6pt, every node/.style={sloped,allow upside down}]
  \draw (4.9,10) node[anchor=south] {(b) Min-Plus Layer};
  \draw[->,line width=1pt] (x3) edge (y4) (x3) edge (y5) (x3) edge (y6);
  \draw[->,line width=1pt] (x4) edge (y4) (x4) edge (y5) (x4) edge (y6);
  \draw (4.9,3) node[font=\large, align=center, scale=0.8] {$\left(\begin{array}{l}y_{1}' \\ y_{2}' \\ y_{3}'\end{array}\right)=\left(\begin{array}{ll}w_{11}' & w_{12}' \\ w_{21}' & w_{22}' \\ w_{31}' & w_{32}'\end{array}\right) \underline{\otimes} \left(\begin{array}{l}x_{1}' \\ x_{2}'\end{array}\right)$};
\end{scope}

\node[draw, circle] (x5) at (8,8) {$x''_1$};
\node[draw, circle] (x6) at (8,6) {$x''_2$};
\node[draw, circle] (y7) at (9.8,9) {$y''_1$};
\node[draw, circle] (y8) at (9.8,7) {$y''_2$};
\node[draw, circle] (y9) at (9.8,5) {$y''_3$};

\begin{scope}[line width=1.6pt, every node/.style={sloped,allow upside down}]
  \draw (8.9,10) node[anchor=south] {(c) Max-Plus Layer};
  \draw[->,line width=1pt] (x5) edge (y7) (x5) edge (y8) (x5) edge (y9);
  \draw[->,line width=1pt] (x6) edge (y7) (x6) edge (y8) (x6) edge (y9);
  \draw (8.9,3) node[font=\large, align=center, scale=0.8] {$\left(\begin{array}{l}y_{1}'' \\ y_{2}'' \\ y_{3}''\end{array}\right)=\left(\begin{array}{ll}w_{11}'' & w_{12}'' \\ w_{21}'' & w_{22}'' \\ w_{31}'' & w_{32}''\end{array}\right) \overline{\otimes} \left(\begin{array}{l}x_{1}'' \\ x_{2}''\end{array}\right)$};
\end{scope}
\end{tikzpicture}

\caption{(a) A linear layer is representd by a linear transformation. (b) A min-plus layer is represented by a min-plus linear transformation. (c) A max-plus layer is represented by a max-plus linear transformation.} \label{F:Linear-Min-Max}
\end{figure}
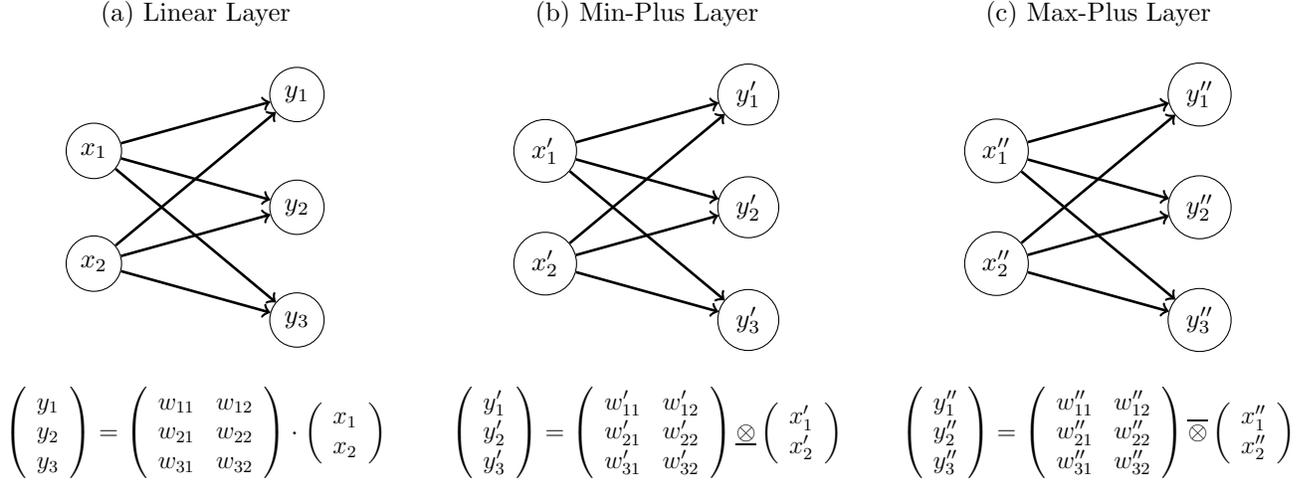  

\subsection{The general form of MMP-NNs}
\begin{figure}[tbp]
\begin{tikzpicture}[>=to,x=1cm,y=0.5cm, scale=1.4]

\begin{scope}[line width=1.6pt, every node/.style={sloped,allow upside down}, yshift=0cm]
  \draw (0,9.7) node[anchor=west] {(a) General form};

  \draw[-latex,line width=3pt] (0,8)--(0.5,8);
  \draw (0.25,8) node[above=0.8] { Input};
  
  \filldraw[fill=yellow,line width=0pt,fill opacity=0.5](1,9)--(2,9)--(2,7)--(1,7);
  \filldraw[fill=red,line width=0pt,fill opacity=0.5](2,9)--(3,9)--(3,7)--(2,7);
  \filldraw[fill=blue,line width=0pt,fill opacity=0.5](3,9)--(4,9)--(4,7)--(3,7);

  \filldraw[fill=yellow,line width=0pt,fill opacity=0.5](5,9)--(6,9)--(6,7)--(5,7);
  \filldraw[fill=red,line width=0pt,fill opacity=0.5](6,9)--(7,9)--(7,7)--(6,7);
  \filldraw[fill=blue,line width=0pt,fill opacity=0.5](7,9)--(8,9)--(8,7)--(7,7);
  
  \draw[-,line width=1.5pt] (1,9) -- (1,7);
  \draw[-,line width=1.5pt] (4,9) -- (4,7);
  \draw[-,line width=1.5pt] (1,9) -- (4,9);
  \draw[-,line width=1.5pt] (1,7) -- (4,7);
  
  \draw[-,line width=0.5pt] (2,9) -- (2,7);
  \draw[-,line width=0.5pt] (3,9) -- (3,7);
  
  \draw (1.5,8) node[scale=0.8] {Linear};
  \draw (2.5,8) node[scale=0.8] {Min-Plus};
  \draw (3.5,8) node[scale=0.8] {Max-Plus};
  
  \draw (4.5,8) node[scale=1.5] {$\cdots$};
  
  \draw[-,line width=1.5pt] (5,9) -- (5,7);
  \draw[-,line width=1.5pt] (8,9) -- (8,7);
  \draw[-,line width=1.5pt] (5,9) -- (8,9);
  \draw[-,line width=1.5pt] (5,7) -- (8,7);
  
  \draw[-,line width=0.5pt] (6,9) -- (6,7);
  \draw[-,line width=0.5pt] (7,9) -- (7,7);
  
  \draw (5.5,8) node[scale=0.8] { Linear};
  \draw (6.5,8) node[scale=0.8] { Min-Plus};
  \draw (7.5,8) node[scale=0.8] { Max-Plus};

  \draw[-latex,line width=3pt] (8.5,8)--(9,8);
  \draw (8.75,8) node[above=0.8] { Output};
\end{scope}

%NO.2

\begin{scope}[line width=1.6pt, every node/.style={sloped,allow upside down}, yshift=-2cm]
 \draw (0,9.7) node[anchor=west] {(b) Type \uppercase\expandafter{\romannumeral1}};
  
  \draw[-latex,line width=3pt] (0,8)--(0.5,8);
  \draw (0.25,8) node[above=0.8] { Input};
  
  \filldraw[fill=yellow,line width=0pt,fill opacity=0.5](1,9)--(2,9)--(2,7)--(1,7);
  \filldraw[fill=blue,line width=0pt,fill opacity=0.5](2,9)--(3,9)--(3,7)--(2,7);
  \filldraw[fill=yellow,line width=0pt,fill opacity=0.5](3,9)--(4,9)--(4,7)--(3,7);
  \filldraw[fill=blue,line width=0pt,fill opacity=0.5](4,9)--(5,9)--(5,7)--(4,7);

  \filldraw[fill=yellow,line width=0pt,fill opacity=0.5](6,9)--(7,9)--(7,7)--(6,7);
  \filldraw[fill=blue,line width=0pt,fill opacity=0.5](7,9)--(8,9)--(8,7)--(7,7);
  
  \draw[-,line width=1.5pt] (1,9) -- (1,7);
  \draw[-,line width=1.5pt] (5,9) -- (5,7);
  \draw[-,line width=1.5pt] (1,9) -- (5,9);
  \draw[-,line width=1.5pt] (1,7) -- (5,7);
  
  \draw[-,line width=0.5pt] (2,9) -- (2,7);
  \draw[-,line width=1.5pt] (3,9) -- (3,7);
  \draw[-,line width=0.5pt] (4,9) -- (4,7);
  
  \draw (1.5,8) node[scale=0.8] {Linear};
  \draw (2.5,8) node[scale=0.8] { Max-Plus};
  \draw (3.5,8) node[scale=0.8] {Linear};
  \draw (4.5,8) node[scale=0.8] { Max-Plus};
  
  \draw (5.5,8) node[scale=1.5] {$\cdots$};
  
  \draw[-,line width=1.5pt] (6,9) -- (6,7);
  \draw[-,line width=1.5pt] (8,9) -- (8,7);
  \draw[-,line width=1.5pt] (6,9) -- (8,9);
  \draw[-,line width=1.5pt] (6,7) -- (8,7);
  
  \draw[-,line width=0.5pt] (7,9) -- (7,7);
  
  \draw (6.5,8) node[scale=0.8] { Linear};
  \draw (7.5,8) node[scale=0.8] { Max-Plus};

  \draw[-latex,line width=3pt] (8.5,8)--(9,8);
  \draw (8.75,8) node[above=0.8] { Output};
\end{scope}

%NO.3

\begin{scope}[line width=1.6pt, every node/.style={sloped,allow upside down}, yshift=-4cm]
  \draw (0,9.7) node[anchor=west] {(c) Type \uppercase\expandafter{\romannumeral2}};
  
  \draw[-latex,line width=3pt] (0,8)--(0.5,8);
  \draw (0.25,8) node[above=0.8] { Input};
  
  \filldraw[fill=yellow,line width=0pt,fill opacity=0.5](1,9)--(2,9)--(2,7)--(1,7);
  \filldraw[fill=red,line width=0pt,fill opacity=0.5](2,9)--(3,9)--(3,7)--(2,7);
  \filldraw[fill=blue,line width=0pt,fill opacity=0.5](3,9)--(4,9)--(4,7)--(3,7);
  \filldraw[fill=red,line width=0pt,fill opacity=0.5](4,9)--(5,9)--(5,7)--(4,7);
  \filldraw[fill=blue,line width=0pt,fill opacity=0.5](5,9)--(6,9)--(6,7)--(5,7);

  \filldraw[fill=red,line width=0pt,fill opacity=0.5](7,9)--(8,9)--(8,7)--(7,7);
  \filldraw[fill=blue,line width=0pt,fill opacity=0.5](8,9)--(9,9)--(9,7)--(8,7);
  
  \draw[-,line width=1.5pt] (1,9) -- (1,7);
  \draw[-,line width=1.5pt] (6,9) -- (6,7);
  \draw[-,line width=1.5pt] (1,9) -- (6,9);
  \draw[-,line width=1.5pt] (1,7) -- (6,7);
  
  \draw[-,line width=1.5pt] (2,9) -- (2,7);
  \draw[-,line width=0.5pt] (3,9) -- (3,7);
  \draw[-,line width=1.5pt] (4,9) -- (4,7);
  \draw[-,line width=0.5pt] (5,9) -- (5,7);
  
  \draw (1.5,8) node[scale=0.8] {Linear};
  \draw (2.5,8) node[scale=0.8] { Min-Plus};
  \draw (3.5,8) node[scale=0.8] {Max-Plus};
  \draw (4.5,8) node[scale=0.8] { Min-Plus};
  \draw (5.5,8) node[scale=0.8] { Max-Plus};
  
  \draw (6.5,8) node[scale=1.5] {$\cdots$};
  
  \draw[-,line width=1.5pt] (7,9) -- (7,7);
  \draw[-,line width=1.5pt] (9,9) -- (9,7);
  \draw[-,line width=1.5pt] (7,9) -- (9,9);
  \draw[-,line width=1.5pt] (7,7) -- (9,7);
  
  \draw[-,line width=0.5pt] (7,9) -- (7,7);
  
  \draw (7.5,8) node[scale=0.8] { Min-Plus};
  \draw (8.5,8) node[scale=0.8] { Max-Plus};

  \draw[-latex,line width=3pt] (9.5,8)--(10,8);
  \draw (9.75,8) node[above=0.8] { Output};
\end{scope}

%NO.4

\begin{scope}[line width=1.6pt, every node/.style={sloped,allow upside down}, yshift=-6cm]
  \draw (0,9.7) node[anchor=west] {(d) Type \uppercase\expandafter{\romannumeral3}};
  
  \draw[-latex,line width=3pt] (0,8)--(0.5,8);
  \draw (0.25,8) node[above=0.8] { Input};
  
  \filldraw[fill=yellow,line width=0pt,fill opacity=0.5](1,9)--(2,9)--(2,7)--(1,7);
  \filldraw[fill=red,line width=0pt,fill opacity=0.5](2,9)--(3,9)--(3,7)--(2,7);
  \filldraw[fill=blue,line width=0pt,fill opacity=0.5](3,9)--(4,9)--(4,7)--(3,7);
  \filldraw[fill=red,line width=0pt,fill opacity=0.5](4,9)--(5,9)--(5,7)--(4,7);
  \filldraw[fill=blue,line width=0pt,fill opacity=0.5](5,9)--(6,9)--(6,7)--(5,7);

  \filldraw[fill=red,line width=0pt,fill opacity=0.5](7,9)--(8,9)--(8,7)--(7,7);
  \filldraw[fill=blue,line width=0pt,fill opacity=0.5](8,9)--(9,9)--(9,7)--(8,7);
  \filldraw[fill=yellow,line width=0pt,fill opacity=0.5](9,9)--(10,9)--(10,7)--(9,7);
  
  \draw[-,line width=1.5pt] (1,9) -- (1,7);
  \draw[-,line width=1.5pt] (6,9) -- (6,7);
  \draw[-,line width=1.5pt] (1,9) -- (6,9);
  \draw[-,line width=1.5pt] (1,7) -- (6,7);
  
  \draw[-,line width=1.5pt] (2,9) -- (2,7);
  \draw[-,line width=0.5pt] (3,9) -- (3,7);
  \draw[-,line width=1.5pt] (4,9) -- (4,7);
  \draw[-,line width=0.5pt] (5,9) -- (5,7);
  
  \draw (1.5,8) node[scale=0.8] {Linear};
  \draw (2.5,8) node[scale=0.8] { Min-Plus};
  \draw (3.5,8) node[scale=0.8] {Max-Plus};
  \draw (4.5,8) node[scale=0.8] { Min-Plus};
  \draw (5.5,8) node[scale=0.8] { Max-Plus};
  
  \draw (6.5,8) node[scale=1.5] {$\cdots$};
  
  \draw[-,line width=1.5pt] (7,9) -- (7,7);
  \draw[-,line width=1.5pt] (9,9) -- (9,7);
  \draw[-,line width=1.5pt] (7,9) -- (10,9);
  \draw[-,line width=1.5pt] (7,7) -- (10,7);
  \draw[-,line width=1.5pt] (10,9) -- (10,7);
  
  \draw[-,line width=0.5pt] (7,9) -- (7,7);
  
  \draw (7.5,8) node[scale=0.8] { Min-plus};
  \draw (8.5,8) node[scale=0.8] { Max-Plus};
  \draw (9.5,8) node[scale=0.8] { Linear};

  \draw[-latex,line width=3pt] (10.5,8)--(11,8);
  \draw (10.75,8) node[above=0.8] { Output};
\end{scope}
\end{tikzpicture}

\caption{(a) The general form of MMP-NNs; (b) Type I MMP-NNs;   (c) Type II MMP-NNs; (d) Type III MMP-NNs;}  \label{F:MMP-NN}
\end{figure}
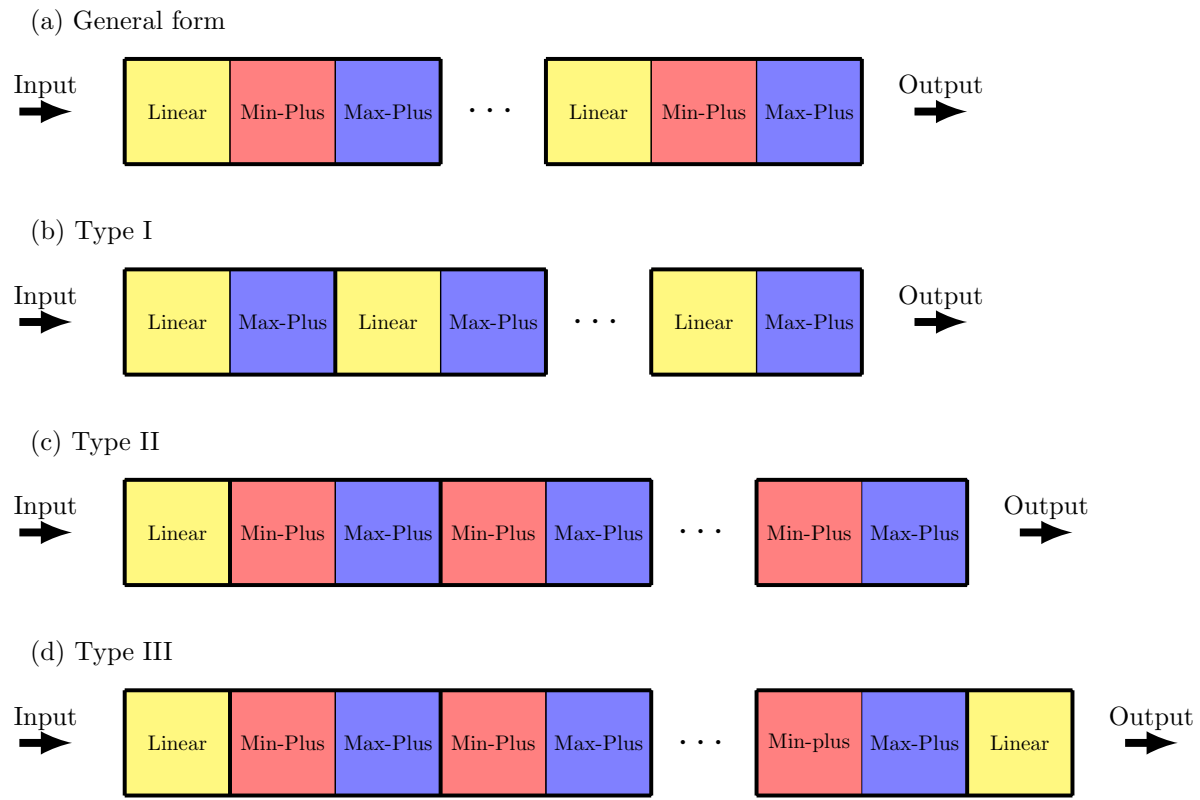

Figure~\ref{F:MMP-NN}(a) shows the general form of a feedforward multilayer MMP-NN, which is composed of a sequence of composite layers, each containing a linear layer, a min-plus layer and a max-plus layer. Consider an MMP-NN with $K$ composite layers and denote the linear transformation, min-plus transformation and max-plus transformation of the $k$-th layer by $\lambda_k$, $\alpha_k$ and $\beta_k$ respectively. 

Let  $d$ be the number of input nodes and $p$ be the number of output nodes of the MMP-NN. Then the MMP-NN is a function $\Phi: \mbbR^d\to \mbbR^p$ which can be written as a composition of linear, min-plus and max-plus transformations $$\Phi=\beta_K\circ \alpha_K\circ \lambda_K \circ \beta_{K-1} \circ \alpha_{K-1} \circ \lambda_{K-1}\circ \cdots \circ \beta_1\circ\alpha_1\circ\lambda_1.$$

Let $d_i$, $n_i$ and $m_i$ be the number of input nodes of the $k$-th linear layer, the $k$-th min-plus layer and the $k$-th max-plus layer respectively. Note that  $d_1=d$ and we let $d_{K+1} = p$. Then we have  $\lambda_k:\mbbR^{d_k}\to \mbbR^{n_k}$,  $\alpha_k:\mbbR^{n_k}\to \mbbR^{m_k}$, $\beta_k:\mbbR^{m_k}\to  \mbbR^{d_{k+1}}$ for $k=1,\cdots,K$. Let $L_k\in \mbbR^{n_k\times d_k}$, $A_k\in\mbbR_{\min}^{m_k\times n_k}$ and $B_k\in\mbbR_{\max}^{d_{k+1}\times m_k}$ be the corresponding matrix, min-plus matrix and max-plus matrix for $\lambda_k$, $\alpha_k$ and $\beta_k$ respectively. Then for each $\vx\in\mbbR^d$, we have $$\Phi(\vx) =(B_K \overline {\otimes}(A_K \underline{\otimes}(L_K \cdot (B_{K-1}\overline{\otimes} \cdots \underline{\otimes}(L_2 \cdot(B_1\overline{\otimes}(A_1\underline{\otimes}(L_1\cdot \vx)\cdots).$$

\begin{remark}
It should be emphasized that there is no associative law for the above intermediate transformations. For example, suppose $A_1$ and $B_1$ both have entries in $\mbbR$. To compute $\vy = B_1\overline{\otimes}(A_1\underline{\otimes}(L_1\cdot \vx))$, one has to compute step by step as $\vu = L_1\cdot \vx$, $\vv = A_1\underline{\otimes}\vu$ and $\vy = B_1\overline{\otimes}\vv$. If one computes $\vu = L_1\cdot \vx$ and $C=B_1\overline{\otimes}A_1$ first and then computes  $C\underline{\otimes} \vu$, the result won't agree with the above step-by-step computation in general. 
\end{remark}

Conventionally, a multilayer feedforward neural network is typically composed of a sequence of  affine transformations and nonlinear activations. The activation function is usually fixed and relatively simple, e.g. ReLU, sigmoid, etc. Even though the nonlinearity added by each neuron of activation is rather small, the totally nonlinearity of the system can be accumulated after layers of tranformations and activations. 

For a single composite layer of a general MMP-NN,  a linear layer $\lambda_k$ is followed by a min-plus layer $\alpha_k$ and  a max-plus layer $\beta_k$ , while the composition of $\alpha_k$ and $\beta_k$ can be considered as a nonlinear activation. This composition can complexify the system to high nonlinearity in one activation. In addition, we allow to train the parameters in all three types of matrices (linear, min-plus and max-plus), as will be discussed in detail in Section~\ref{S:train}. 

\subsection{MMP-NNs of special types: Type I, Type II and Type III}

The architecture of the system can be simplified by dropping out some layers (or simply let the corresponding dropout layer be represented by the identity linear, min-plus or max-plus matrix). Here we show two examples of such simplifications: Figure~\ref{F:MMP-NN}(b) shows an MMP-NN (called Type I) with the min-plus layer removed for each composite layer; Figure~\ref{F:MMP-NN}(c) shows an MMP-NN (called Type II) with the linear layer removed for each composite layer except the first one; Figure~\ref{F:MMP-NN}(d) shows an MMP-NN (called Type III) which is a Type II network attached with an additional linear layer to the output end.

While the structures of Type II and Type III networks are non-conventional which we will discuss more in Section~\ref{S:UniApprox}, Type I networks actually have connections to several conventional networks as explained in the following:

\begin{enumerate}[(i)]
\item \textbf{Maxout activation.} Type I networks are essentially equivalent to maxout networks. 

Recall that in a maxout unit \cite{GWMCB2013}, for an input $\vx\in \mbbR^d$, given $W^i = (W^i_j) \in \mbbR^{n_i\times d}$ where $W^i_j$ is the $j$-th row of $W^i$ and $\vb^i=(b^i_j)\in\mbbR^{n_i}$ for $i=1,\cdots,m$, the output is $\vy=(y_i)\in \mbbR^m$ where $y_i = \max_{j=1,\cdots, m}(W^i_j\vx+b^i_j)$. Here $W^i$ and $\vb^i$ are learned parameters. 

Let $n=\sum_{i=1}^m n_i$.  A maxout network can be translated to a Type I network with a linear layer represented by matrix $L$ followed by a max-plus layer represented by a max-plus matrix $B$. More precisely, we have $\vy = B\overline{\otimes} (L\cdot \vx)$ with $L = \left(\begin{array}{c}W^1 \\ \vdots \\ W^m\end{array}\right)\in \mbbR^{n\times d}$ and $$B=\left(\begin{array}{ccccccccc} b^1_1  & \cdots & b^1_{n_1} & -\infty & -\infty &&  \cdots &  & -\infty \\ -\infty & \cdots & -\infty & b^2_1  & \cdots & b^2_{n_2} & -\infty & \cdots & -\infty  \\ &\vdots &&& \vdots &&&\vdots &  \\  -\infty & -\infty &&  \cdots &  & -\infty & b^m_1  & \cdots & b^m_{n_m}  \end{array}\right)\in\mbbR_{\max}^{m\times n}.$$

\item \textbf{ReLU activation.} For an input $\vx\in \mbbR^d$, consider an affine transformation followed by ReLU units. Given $W = (W_i)\in \mbbR^{m\times d}$ where $W_i$ is the $i$-th row of $W$ and $\vb=(b_i)\in\mbbR^m$, the output is $\vy = (y_i)\in \mbbR^m$ which is computed by $y_i = \max(W_i\cdot \vx+b_i,0)$.  

The above computation can be translated to a Type I network computation. Let $L= \left(\begin{array}{c}W \\ 0 \end{array}\right)\in \mbbR^{(m+1)\times d}$ and $B=\left(\begin{array}{cccccc}b_1 & -\infty & \cdots & -\infty & 0 \\  -\infty  &b_2  & \cdots & -\infty & 0\\ \vdots & \vdots &\ddots &\vdots  \\ -\infty & \cdots  & -\infty & b_m & 0 \end{array}\right)\in \mbbR_{\max}^{m\times(m+1)}$. Then $\vy = B\overline{\otimes} (L\cdot \vx)$. 

\item \textbf{Leaky/Parametric ReLU activation.} For an input $\vx\in \mbbR^d$, consider an affine transformation followed by leaky ReLU units. Given $W = (W_i)\in \mbbR^{m\times d}$ where $W_i$ is the $i$-th row of $W$ and $\vb=(b_i)\in\mbbR^m$, let $\vv = (v_i)\in \mbbR^m$  with $v_i=W_i\cdot \vx+b_i$. Then the output is $\vy = (y_i)\in \mbbR^m$ computed by $y_i = \max(v_i,\lambda v_i)$.  

The above computation can also be translated to a Type I network computation. Let $L= \left(\begin{array}{c}W \\ \lambda W \end{array}\right)\in \mbbR^{(2m)\times d}$ and $$B=\left(\begin{array}{cccccccccc}b_1 & -\infty & \cdots & -\infty & \lambda b_1 & -\infty & \cdots & -\infty  \\  -\infty  &b_2  & \cdots & -\infty & -\infty  & \lambda b_2  & \cdots & -\infty\\ \vdots & \vdots &\ddots &\vdots & \vdots & \vdots &\ddots &\vdots  \\ -\infty & \cdots  & -\infty & b_m &  -\infty & \cdots  & -\infty & \lambda b_m \end{array}\right)\in \mbbR_{\max}^{m\times 2m}.$$ Then $\vy = B\overline{\otimes} (L\cdot \vx)$.

\item \textbf{Log-Sum-Exp (LSE) network.} The concept of LSE neural networks was introduced by Calafiore et al. \cite{CGP2019}  recently as a smooth universal approximator of convex functions. In addition, they show that the difference-LSE (the difference of the outputs of two LSE networks) is a universal approximator of continuous functions \cite{CGP2020}. 

By definition, an $LSE$ is a function $f:\mbbR^d\to \mbbR$ that can be written as $f(\vx) = \log\left(\sum_{i=1}^n \beta_i \exp(\va^{(i)}\cdot \vx)\right)$ for some $n\in \mbbN$. Here $\va,\vx \in\mbbR^d$. Given $T \in \mbbR_{>0}$,  by changing the base of the log function and the exp function concordantly,  an $LSE_T$ is a function that can be written as $$f_T(\vx) = T \log\left(\sum_{i=1}^n \beta_i^{1/T} \exp(\va^{(i)}\cdot \vx/T)\right)=T \log\left(\sum_{i=1}^n \exp(\va^{(i)}\cdot (\vx/T)+b_i/T)\right)$$ where $b_i = \log \beta_i$. As $T\to 0^+$, the family of functions $(f_T)_{T>0}$ converges uniformly to $f_0=\max_{i=1,\cdots,n}\left(\va^{(i)}\cdot \vx+b_i\right)$. The above procedure of deriving $f_0$ from  $(f_T)_{T>0}$ is called \emph{Maslov dequantization}, %{\color{red}reference} 
which is among the original motivations in the development of tropical mathematics.

Let $L = \left(\begin{array}{c}\va^{(1)} \\ \vdots \\ \va^{(n)} \end{array}\right)\in \mbbR^{n\times d}$ with $\va^{(i)}$ being row vectors. Let $B=\left(\begin{array}{ccc} b_1 & \cdots & b_n \end{array}\right)\in \mbbR^{1\times n}$. Then $f_0(\vx) = B\overline{\otimes} (L\cdot \vx)$. This means that Type I networks here are actually dequantization of $LSE$ networks. 

\end{enumerate}

\section{Universal Approximation} \label{S:UniApprox}
In this section, we show the property of universal approximation of Type I, II and III networks. In particular, we will focus our discussion on Type II networks which is the most non-conventional. Note that a Type II network contains only one linear layer and all multiplication operations of the whole network are performed in this layer. The other layers are min-plus layers and max-plus layers where only additions, min operations and max operations are performed. Actually the real power of Type II networks is that by specific configurations, one can tremendously reduce the the  computation of multiplications which are more resource-intensive than additions and min/max operations. For example, to approximate a continuous function $f:\mbbR\to \mbbR$ using a Type I network, we need to generate enough lines with distinct slopes from the linear layer to make a refined approximation of $f$. However, using a Type II network, it suffices to use much less lines with only a few slopes (for example, two fixed slopes $\pm 1$ or two  slopes derived from training) from the linear layer to  approximate $f$. 

\subsection{Type I Networks}
As shown in the previous section, Type I networks (Figure~\ref{F:MMP-NN}(b))  can be used to express several conventional networks that have already been proven to be universal approximators:

\begin{enumerate}[(i)]
\item \textbf{Maxout networks.} As shown in \cite{GWMCB2013}, maxout networks are universal approximators of convex functions and by using two maxout networks, the difference of their outputs can be used to approximate any continuous functions. 
\item \textbf{ReLU networks.} From a tropical-geometric point of view, \cite{ZNL2018} shows that the output of a ReLU network is actually a tropical rational function (the difference of two tropical posynomials). As for maxout networks, tropical posynomials are universal approximators of convex functions and  tropical rational functions are universal approximators of continuous functions. 
\item \textbf{LSE networks.} As proven in \cite{CGP2019} and \cite{CGP2020}, LSE networks are universal approximators of convex functions and difference-LSE networks are universal approximators of continuous functions. 
\end{enumerate} 

It follows that  Type I networks are universal approximators. This also means that general MMP-NNs (Figure~\ref{F:MMP-NN}(a)) are universal approximators.

\subsection{Type II networks and Linear-Min-Max networks}
As for the related networks of Type I networks discussed above, a general approach of showing the property of universal approximation of continuous functions is to show the property of universal approximation of convex functions first and then use the difference of two approximators of convex functions for the universal approximation of general continuous functions.  However, this approach does not apply to Type II networks  (Figure~\ref{F:MMP-NN}(c)),  since there is no linear layer at the output end which means that we can not derive the difference of two approximators of convex functions. 

If a Type II network only contains one single composite layer (made of a linear layer, a min-plus layer and a max-plus layer) , then we also call it a \emph{Linear-Min-Max (LMM) network}. In the following, we will analyze Type II networks in detail and show that: 
\begin{enumerate}[(1)]
\item Type II networks are equivalent to Linear-Min-Max (LMM) networks, 
\item LMM networks are universal approximators and thus Type II networks are universal approximators, and 
\item by specific configurations, the number of multiplications can be tremendously reduced in a Type II network. 
\end{enumerate}

\subsubsection{Type II networks are equivalent to Linear-Min-Max (LMM) networks}

Using the theory of tropical convexity introduced in Section~\ref{SS:tconv}, there is an elegant characterization of the space of all possible functions that can be expressed by Type II networks with the linear layer fixed. 

Consider a Typer II network which affords the function $\Phi: \mbbR^d\to \mbbR$. Then $\Phi$ can be written as a composition of transformations  $$\Phi=\beta_K\circ \alpha_K \circ  \cdots \circ \beta_2\circ \alpha_2 \circ \beta_1\circ\alpha_1\circ\lambda$$
where $\lambda$ is a linear transformation, $\alpha_i$'s are min-plus transformations and $\beta_i$'s are max-plus transformations.  Here $\beta_K$ is a max-plus linear transformation to $\mbbR$ (the output node is fixed), while the number $K$ of layers and the number of intermediate nodes are arbitrary. Denote by $S_\lambda$ the space of all functions afforded by Type II networks with the linear layer fixed to $\lambda:\mbbR^d\to\mbbR^n$.  Denote by $S'_\lambda$ the space of all functions afforded by LMM networks (meaning that $K=1$) with the linear layer fixed to $\lambda$.  

Now let the coordinates of $\mbbR^d$ be $x_1,\cdots,x_d$. Here we denote by $\tilde{x}_i$  the coordinate function on $\mbbR^d$ sending vectors $\vx = (x_j)_j \in \mbbR^d$ to the coordinate value of $x_i$ for $i=1,\cdots,d$. Then $\lambda(\vx)$ is a vector $\vy=(f_i(\vx))_i\in\mbbR^{n}$ where each $f_i:\mbbR^d\to \mbbR$ is a linear function of $\vx$. In particular, the function $f_i$ is a linear combination of the coordinate functions $\tilde{x}_1,\cdots,\tilde{x}_d$, i.e., $f_i(\vx) = a_{i1}\tilde{x}_1+\cdots+a_{id}\tilde{x}_d$. 

Let $V_\lambda:=\{f_1, \cdots, f_n\}\subseteq C(\mbbR^d)$. By Proposition~\ref{P:TropConv}, we see that

\begin{enumerate}[(i)]
\item $\lowertconv(V_\lambda)  = \{(a_1\odot g_1)\minplus\cdots\minplus (a_m\odot g_m)\mid m\in\mbbN, a_i\in \mbbR,g_i\in V_\lambda\}$
\item $\uppertconv(V_\lambda)  = \{(b_1\odot g_1)\maxplus\cdots\maxplus (b_m\odot g_m)\mid m\in\mbbN, b_i\in \mbbR,g_i\in V_\lambda\}$
\item $\lowertconv(\uppertconv(V_\lambda))=\{(a_1\odot g_1)\minplus\cdots\minplus (a_m\odot g_m)\mid m\in\mbbN, a_i\in \mbbR,g_i\in \uppertconv(V_\lambda)\}$
\item $\uppertconv(\lowertconv(V_\lambda))=\{(b_1\odot g_1)\maxplus\cdots\maxplus (b_m\odot g_m)\mid m\in\mbbN, b_i\in \mbbR,g_i\in \lowertconv(V_\lambda)\}$.
 \end{enumerate}
 
 By substitution, elements of $\lowertconv(\uppertconv(V_\lambda))$ can always be written as $(c_{11}\odot f_1\maxplus \cdots \maxplus c_{1n}\odot f_n)\minplus\cdots \minplus (c_{m1}\odot f_1\maxplus \cdots \maxplus c_{mn}\odot f_n)$ with $c_{ij}\in\mbbR_{\max}$, and elements of $\uppertconv(\lowertconv(V_\lambda))$ can always be written as $(d_{11}\odot f_1\minplus \cdots \minplus d_{1n}\odot f_n)\maxplus\cdots \maxplus (d_{m1}\odot f_1\minplus \cdots \minplus d_{mn}\odot f_n)$ with $d_{ij}\in \mbbR_{\min}$. Therefore, we may conclude that $S'_\lambda=\uppertconv(\lowertconv(V_\lambda))$.

Actually we have $\lowertconv(\uppertconv(V_\lambda)) =  \uppertconv(\lowertconv(V_\lambda))$  (a detailed proof is provided in Section~3 of our previous work \cite{Luo2018} which is  essentially based on the mutual distributive law of $\minplus$ and $\maxplus$ as stated in Remark~\ref{R:minmaxplus})), and consequently we can write $\loweruppertconv(V_\lambda):=\lowertconv(\uppertconv(V_\lambda)) =  \uppertconv(\lowertconv(V_\lambda))$. Therefore, the elements $f$ of $\loweruppertconv(V_\lambda)$ can be always be written as $(c_{11}\odot \tilde{f}_1\minplus \cdots \minplus c_{1n}\odot\tilde{f}_n)\maxplus\cdots \maxplus (c_{m1}\odot f_1\minplus \cdots \minplus c_{mn}\odot f_n)$ for some $m\in\mbbN$. This means $f$ can be realized by an LMM network $\Phi=\beta \circ\alpha \circ\lambda$ where $\alpha \in \mbbR_{\min}^{n\times m}$ is a min-plus layer and $\beta\in \mbbR_{\max}^{m\times 1}$ is a max-plus layer. Consequently, we conclude that $S'_\lambda=\loweruppertconv(V_\lambda)$.

Furthermore, for a deep Type II network $$\Phi=\beta_K\circ \alpha_K \circ  \cdots \circ \beta_2\circ \alpha_2 \circ \beta_1\circ\alpha_1\circ\lambda$$ with $K\geq 2$, we see that $$\Phi\in \uppertconv(\lowertconv(\cdots (\uppertconv(\lowertconv(V_\lambda)\cdots )= \loweruppertconv(\cdots(\loweruppertconv(V_\lambda)\cdots)$$ where $\loweruppertconv$ is applied $K$ times. But actually by applying the formula $\lowertconv(\uppertconv(V_\lambda)) =  \uppertconv(\lowertconv(V_\lambda))$ recurrently, it follows that $\loweruppertconv(\cdots(\loweruppertconv(V_\lambda)\cdots)= \loweruppertconv(V_\lambda)$. Therefore, $\Phi$ can always be equally expressed by an LMM network, i.e., $\Phi = \beta \circ\alpha \circ\lambda$ for some $m$, a min-plus transformation $\alpha \in \mbbR_{\min}^{n\times m}$ and a max-plus transformation $\beta\in \mbbR_{\max}^{m\times 1}$. 

In sum, we have the following theorem:
\begin{theorem}\label{T:TypeII-LMM}
Type II networks are equivalent to LMM networks. More specifically, for any fixed linear transformation $\lambda$, we have $S_\lambda = S'_\lambda  = \loweruppertconv(V_\lambda)$. 
\end{theorem}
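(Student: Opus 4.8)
The plan is to establish the two equalities $S'_\lambda = \loweruppertconv(V_\lambda)$ and $S_\lambda = \loweruppertconv(V_\lambda)$ separately and then combine them with the obvious inclusion $S'_\lambda \subseteq S_\lambda$, which holds because an LMM network is precisely a Type II network with $K=1$. For the first equality I would unwind $\Phi = \beta\circ\alpha\circ\lambda$: applying $\lambda$ produces the coordinate functions $f_1,\dots,f_n \in V_\lambda$; the min-plus layer $\alpha$ outputs coordinates that are min-plus combinations of the $f_i$, hence elements of $\lowertconv(V_\lambda)$ by Proposition~\ref{P:TropConv}; and the final max-plus layer $\beta$ (outputting a single coordinate) returns a max-plus combination of these, i.e.\ an element of $\uppertconv(\lowertconv(V_\lambda))$. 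Conversely, any function in the resulting max-of-mins form directly prescribes the width $m$ and the entries of $\alpha$ and $\beta$, so $S'_\lambda = \uppertconv(\lowertconv(V_\lambda))$. Importing the commutation identity $\lowertconv(\uppertconv(V_\lambda)) = \uppertconv(\lowertconv(V_\lambda))$ from \cite{Luo2018} (itself a consequence of the mutual distributive law in Remark~\ref{R:minmaxplus}) lets me name this common set $\loweruppertconv(V_\lambda)$ and conclude $S'_\lambda = \loweruppertconv(V_\lambda)$.

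For the general Type II network with $K$ composite layers I would argue by induction on the block index, with the hypothesis that every coordinate of the vector entering the $(k+1)$-st block lies in $\loweruppertconv(V_\lambda)$. The base case is the first block, which by the computation above lands in $\uppertconv(\lowertconv(V_\lambda)) = \loweruppertconv(V_\lambda)$. For the inductive step, a min-plus layer $\alpha_k$ moves the coordinates into $\lowertconv(\loweruppertconv(V_\lambda))$ and the following max-plus layer $\beta_k$ into $\uppertconv(\lowertconv(\loweruppertconv(V_\lambda)))$; the point is that both hull operations stabilize on $\loweruppertconv(V_\lambda)$. Indeed, using idempotency of each hull (the hull of a tropically convex set is itself) together with the commutation identity, one checks $\lowertconv(\loweruppertconv(V_\lambda)) = \loweruppertconv(V_\lambda)$ and $\uppertconv(\loweruppertconv(V_\lambda)) = \loweruppertconv(V_\lambda)$. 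Hence the coordinates stay in $\loweruppertconv(V_\lambda)$ after every block, so the scalar output $\Phi$ lies in $\loweruppertconv(V_\lambda)$, giving $S_\lambda \subseteq \loweruppertconv(V_\lambda)$; combined with $\loweruppertconv(V_\lambda) = S'_\lambda \subseteq S_\lambda$ this forces $S_\lambda = S'_\lambda = \loweruppertconv(V_\lambda)$.

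The main obstacle is precisely this stabilization: a priori, stacking $K$ alternating min-then-max blocks could strictly enlarge the expressible class, and the content of the theorem is that it does not. Everything funnels into the commutation identity $\lowertconv\circ\uppertconv = \uppertconv\circ\lowertconv$, which is the one genuinely nontrivial ingredient; once it and idempotency are granted, collapsing the $K$-fold iteration to a single $\loweruppertconv$ is a routine induction. I would treat that identity as a black box from \cite{Luo2018}, but take care to apply the idempotency and commutation at the level of the \emph{enlarged} generating set $\loweruppertconv(V_\lambda)$ rather than only to the original $V_\lambda$, since that is where the inductive step actually invokes them.
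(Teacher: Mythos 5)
Your proposal is correct and follows essentially the same route as the paper: identify $S'_\lambda$ with $\uppertconv(\lowertconv(V_\lambda))$, invoke the commutation identity from \cite{Luo2018} to define $\loweruppertconv(V_\lambda)$, and collapse the $K$-fold alternation by showing the hull operations stabilize. Your explicit induction and the remark that idempotency and commutation must be applied to the enlarged set $\loweruppertconv(V_\lambda)$ merely make precise what the paper compresses into ``applying the formula recurrently.''
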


\subsubsection{Type II networks are universal approximators of continuous functions}
 By Theorem~\ref{T:TypeII-LMM}, we see that to show that Type II networks are universal approximators  is equivalent to show that LMM networks are universal approximators.  
 
Recall that for a normed space $(X, \Vert \cdot \Vert )$,  a function $f:X \to \mbbR$ is called \emph{Lipschitz continuous} if there exists a constant $K>0$ such that for all $x$ and $x'$ in X, $|f(x)-f(x')| \leq K\Vert x-x'\Vert$. Any such $K$ is referred to as a \emph{Lipschitz constant} of $f$. We let $X\subseteq \mbbR^d$ and use the maximum norm  $\Vert\cdot \Vert_\infty$  on $X$ for simplicity of discussion. The following theorem actually says any Lipschitz function can be approximated by a sequence Type II networks with a common fixed linear layer. 

\begin{theorem} \label{T:UniApprox}
For any Lipschitz function $f$ on a finite region $X\subseteq \mbbR^d$, there exists some linear transformation $\lambda$  such that  a sequence of functions in $S_\lambda$ converging to $f$ uniformly.  
\end{theorem}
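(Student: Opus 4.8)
The plan is to use Theorem~\ref{T:TypeII-LMM} to turn the statement into one purely about tropical convex hulls: since $S_\lambda = \loweruppertconv(V_\lambda)$, it suffices to exhibit a single fixed linear layer $\lambda$ together with a sequence of elements of $\loweruppertconv(V_\lambda)$ converging uniformly to $f$. Let $K$ be a Lipschitz constant of $f$ with respect to $\Vert\cdot\Vert_\infty$. I would fix $\lambda:\mbbR^d\to\mbbR^{2d}$ to be the linear transformation whose rows are $\pm K\ve_k$, so that $V_\lambda = \{\,K\tilde{x}_k,\,-K\tilde{x}_k : k=1,\dots,d\,\}$. The reason the max norm is the convenient choice is that it is tropically compatible with this generating set: for any point $\mathbf{p}=(p_k)\in X$ one has $K\Vert\vx-\mathbf{p}\Vert_\infty = \max_{k}\max\big(K\tilde{x}_k - Kp_k,\, -K\tilde{x}_k + Kp_k\big)$, which is exactly a max-plus combination of the generators in $V_\lambda$, with $\mathbf{p}$ entering only through the real offsets $\mp Kp_k$ and never through the slopes.

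With this representation in hand, for each fixed $\mathbf{p}\in X$ the function $\vx\mapsto f(\mathbf{p}) + K\Vert\vx-\mathbf{p}\Vert_\infty$ lies in $\uppertconv(V_\lambda)$, since adding the real constant $f(\mathbf{p})$ to a max-plus combination leaves it a max-plus combination. The classical McShane--Whitney extension identity then gives, for $\vx\in X$, the equality $f(\vx) = \min_{\mathbf{p}\in X}\big(f(\mathbf{p}) + K\Vert\vx-\mathbf{p}\Vert_\infty\big)$: the value at $\mathbf{p}=\vx$ equals $f(\vx)$, and every other term dominates $f(\vx)$ by the Lipschitz inequality. Thus $f$ is realized as an infinite min-plus combination of elements of $\uppertconv(V_\lambda)$, and all that remains is to discretize the infimum so as to land inside the genuinely finitely generated hull $\lowertconv(\uppertconv(V_\lambda)) = \loweruppertconv(V_\lambda)$.

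To discretize, I fix $\epsilon>0$, choose a finite $\epsilon$-net $\{\mathbf{p}_1,\dots,\mathbf{p}_N\}\subseteq X$ (available since $X$ is bounded), and set $g_\epsilon(\vx) = \min_{1\le j\le N}\big(f(\mathbf{p}_j) + K\Vert\vx-\mathbf{p}_j\Vert_\infty\big)$. Each term lies in $\uppertconv(V_\lambda)$, so $g_\epsilon\in\lowertconv(\uppertconv(V_\lambda)) = \loweruppertconv(V_\lambda) = S_\lambda$ by Theorem~\ref{T:TypeII-LMM}; the crucial feature is that $\lambda$ is the \emph{same} for every $\epsilon$, with only the number of min-plus nodes and the scalar offsets varying. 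The error estimate is then routine: $g_\epsilon \ge f$ on $X$ term by term from the Lipschitz bound, while picking $\mathbf{p}_j$ with $\Vert\vx-\mathbf{p}_j\Vert_\infty\le\epsilon$ yields $g_\epsilon(\vx) \le f(\mathbf{p}_j) + K\epsilon \le f(\vx) + 2K\epsilon$. Hence $0\le g_\epsilon - f\le 2K\epsilon$ uniformly on $X$, and letting $\epsilon\to 0$ delivers the required sequence.

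The main conceptual obstacle, as the text itself flags, is that the usual difference-of-convex-functions route is unavailable here: a Type II network has no trailing linear layer, so one cannot subtract two convex approximators. The device that circumvents this is the observation that the max-norm Lipschitz-extension formula is \emph{already} a min-of-max expression in the signed coordinate functions $\pm\tilde{x}_k$, i.e.\ it is intrinsically of $\loweruppertconv$ form and requires no subtraction. The only point genuinely demanding care is guaranteeing that one fixed linear layer serves the entire sequence, and this is precisely what the representation of $K\Vert\cdot - \mathbf{p}\Vert_\infty$ above secures, since every dependence on the net points $\mathbf{p}_j$ is pushed into scalar offsets absorbed by the tropical coefficients.
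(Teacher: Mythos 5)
Your proof is correct, and it is essentially the mirror image of the paper's argument rather than a different method. The paper uses the same fixed linear layer $\lambda(\vx)=(\pm Kx_i)_i$, but builds the approximant in the order dictated by the LMM architecture: the min-plus layer first produces downward cones (``pyramids'') $g_i(\vx)=f(\vx_i)-K\Vert\vx-\vx_i\Vert_\infty$ centered at the points of a grid of mesh $\delta$ with apex heights $f(\vx_i)$, and the max-plus layer then takes their upper envelope $h=\max_i g_i$, giving $\Vert h-f\Vert_\infty\le 2K\delta$ with $h\le f$; this lands in $\uppertconv(\lowertconv(V_\lambda))=S'_\lambda$ directly, with no need for the swap identity. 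You instead take the minimum of upward cones $f(\mathbf{p}_j)+K\Vert\vx-\mathbf{p}_j\Vert_\infty$ over an $\epsilon$-net --- the discretized McShane--Whitney upper extension --- which approximates $f$ from above and a priori lies in $\lowertconv(\uppertconv(V_\lambda))$, so you must (and correctly do) invoke the identity $\lowertconv(\uppertconv(V_\lambda))=\uppertconv(\lowertconv(V_\lambda))$ from Theorem~\ref{T:TypeII-LMM} to place it in $S_\lambda$. The two routes are equivalent in substance and yield the same $2K\epsilon$ bound; what yours buys is the explicit identification with the Lipschitz-extension formula and a cleaner discretization (a finite $\epsilon$-net of points of $X$, rather than the paper's grid intersected with $X$, whose nearest-grid-point step is slightly delicate for irregularly shaped $X$), at the cost of one extra appeal to the tropical-convexity swap identity.
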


\begin{proof}
Let $f:X\to \mbbR$  be a Lipschitz continuous function with a Lipschiz constant $K$ for the maximum norm  $\Vert\cdot  \Vert_\infty$  on $\mbbR^d$. This means that $|f(\vx)-f(\vx')| \leq K\Vert \vx-\vx'\Vert_\infty$ for all $\vx,\vx'\in X$. We will use LMM networks to make the approximation in the following. 

For the linear layer, we use a fixed linear transformation $\lambda: \mbbR^d \to \mbbR^{2d}$ where $\lambda(\vx) =  (f_i(\vx))_i\in\mbbR^{2d}$ with $f_{2i-1}(\vx)=Kx_i$ and $f_{2i}(\vx)=-Kx_i$ for $\vx =(x_i)_i\in\mbbR^d$. Then after a min-plus linear transformation $\alpha:\mbbR^{2d}\to \mbbR^m$, we get $\alpha(\lambda(\vx)) = (g_i(\vx))_i\in \mbbR^m$ where $g_i(\vx) = a_{i1}\odot f_1(\vx)\minplus\cdots \minplus a_{i,2d}\odot f_{2d}(\vx) = \min(f_1(\vx)+a_{i1},\cdots, f_{2d}(\vx)+a_{i,2d})$ for $i=1,\cdots,m$. We note that the graph of $g_i$ as a function on $\mbbR^d$ has the shape of a pyramid. Let $P_i$ be the tip of the pyramid of $g_i$. By varying value of $a_{ij}$, the tip $P_i$ can have an arbitrary height and the projection of $P_i$ to $\mbbR^d$ can be located anywhere in $\mbbR^d$. In particular, we call the projection of  $P_i$ the center of $g_i$. 

Consider two points $\vx$ and $\vx'$ in $\mbbR^d$. Let $g$ and $g'$ be two min-plus combinations of $f_1,\cdots, f_{2d}$ with centers at $\vx$ and $\vx'$ respectively. 

Let $h=\max(g,g')$.  We observe that as long as $|g(\vx)-g'(\vx')|\leq K \Vert \vx - \vx' \Vert_\infty$, we have $h(\vx)=g(\vx)$ and $h(\vx')=g'(\vx')$. It is easy to verify that  $g$, $g'$ and $h$ are Lipschitz continuous also with a Lipschiz constant $K$. 

Assigning a grid $\mcalG$ on $\mbbR^d$ with $\Delta x_1=\cdots \Delta x_n = \delta$.  Since $X$ is a finite region, the intersection of $\mcalG$ and $X$ must be a finite set $M$. Suppose the cardinality of $M$ is $m$. By adjusting the entries for the matrix $(a_{ij})\in\mbbR^{m\times(2d)}$, we can arrange the $g_i$'s such that  the following properties are satisfied: 
\begin{enumerate}[(i)]
\item The set of centers of $g_i$'s is exactly $M$; and
\item for each $g_i$, we have $g_i(\vx_i) = f(\vx_i)$ where $\vx_i$ is the center of $g_i$. 
\end{enumerate}

Let $h=\max_{i=1,\cdots m}(g_i)$. We claim that for each point $\vx\in X$, $|h(\vx) - f(\vx)|$ is bounded by $2K\delta$. Actually let $\vx'$ be a grid point nearest to $\vx$. This means that $\Vert\vx-\vx'\Vert_\infty\leq \delta$. Therefore, $|h(\vx) - h(\vx')|\leq K\delta$ and $|f(\vx) - f(\vx')|\leq K\delta$ by the Lipschitz continuity of $h$ and $f$. In addition, since $g_i(\vx_i) = f(\vx_i)$ for all the grid points $\vx_i$ and $f$ is Lipschitz continuous, we conclude that $h(\vx_i) = g_i(\vx_i)=f(\vx_i)$ for all grid points $\vx_i$. Then $$|h(\vx) - f(\vx)|\leq |h(\vx) - h(\vx')|+|h(\vx') - f(\vx')|+|f(\vx') - f(\vx)| \leq K\delta + 0+ K\delta = 2K\delta.$$

The above argument actually shows that $\Vert h-g\Vert_\infty\leq 2K\delta$. Therefore, we see that $h$ approaches $f$ uniformly as $\delta$ approaches $0$ which can be achieved by increasing the number of intermediate nodes of the LMM network. Thus we have proved that LMM networks (equivalently Type II networks) with fixed linear layer are universal approximators of Lipschitz  functions. 
\end{proof}

Note that on a compact metric space $X$, any continuous function is the uniform limit of a sequence of Lipschitz functions \cite{Georgano1967}. Hence we have the following corollary of the above theorem. 

\begin{corollary}\label{C:UniApprox}
Type II networks (equivalently LMM networks) are universal approximators of continuous functions. 
\end{corollary}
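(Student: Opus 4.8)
The plan is to deduce this corollary directly from Theorem~\ref{T:UniApprox} by inserting an intermediate layer of approximation through the class of Lipschitz functions, exactly as signalled by the citation \cite{Georgano1967} in the sentence preceding the statement. The theorem already handles Lipschitz targets; the only gap is to bridge from Lipschitz functions to arbitrary continuous functions, and this is a standard ``approximate through a dense subclass'' argument.

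Concretely, I would fix a continuous function $f$ on a compact set $X\subseteq\mbbR^d$ and an arbitrary tolerance $\varepsilon>0$. First, invoke \cite{Georgano1967}: on a compact metric space every continuous function is a uniform limit of Lipschitz functions, so there is a Lipschitz function $g$ on $X$ with $\Vert f-g\Vert_\infty<\varepsilon/2$. Second, apply Theorem~\ref{T:UniApprox} to $g$: letting $K$ be a Lipschitz constant for $g$, the theorem supplies a linear transformation $\lambda$ (the one whose rows realize the maps $\vx\mapsto\pm K x_i$) together with a function $h\in S_\lambda$ realizable by a Type II network with $\Vert g-h\Vert_\infty<\varepsilon/2$. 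Third, combine the two estimates by the triangle inequality,
$$\Vert f-h\Vert_\infty \leq \Vert f-g\Vert_\infty+\Vert g-h\Vert_\infty < \varepsilon,$$
and since $\varepsilon$ is arbitrary and $h$ is (by Theorem~\ref{T:TypeII-LMM}) equally an LMM network output, this produces a sequence of Type II networks converging uniformly to $f$.

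The part to handle with care—more a point of bookkeeping than a genuine obstacle—is that the linear transformation $\lambda$ produced by Theorem~\ref{T:UniApprox} depends on the Lipschitz constant $K$ of the intermediate function $g$. As $\varepsilon\to 0$, the approximating Lipschitz functions $g$ may have increasing Lipschitz constants, so the slopes $\pm K$ of the linear layer must be allowed to vary along the approximating sequence. This is harmless for the present statement, which only claims universal approximation by Type II networks and places no constraint requiring a common linear layer. It does, however, mean the corollary cannot be sharpened to a single fixed $\lambda$, in contrast to Theorem~\ref{T:UniApprox} where $\lambda$ is fixed once the target (and hence its Lipschitz constant) is given; I would note this distinction explicitly so the reader does not expect a uniform linear layer across all continuous targets.
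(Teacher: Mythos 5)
Your proposal is correct and follows exactly the paper's own route: the paper likewise deduces the corollary from Theorem~\ref{T:UniApprox} by citing that on a compact metric space every continuous function is a uniform limit of Lipschitz functions, leaving the triangle-inequality bookkeeping implicit. Your added remark that the linear layer $\lambda$ must be allowed to vary with the Lipschitz constant of the intermediate approximant is a correct and worthwhile clarification, but it does not change the argument.
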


\subsubsection{Cutting down the number  of multiplications in Type II networks}

In the proof of Theorem~\ref{T:UniApprox}, if the input of the Type II network is $d$-dimensional, there are only $d$ multiplications in total that need to be computed ($f_{2i}(\vx)=-Kx_i$ can be computed from $f_{2i-1}(\vx)=Kx_i$ by negation), all in the linear transformation $\lambda: \mbbR^d \to \mbbR^{2d}$. If we already know that the function to be approximated has relatively small variations (in this case we may assume $K=1$ is a Lipschitz constant of $f$), then we may even simply fix the linear layer with $x_i$ affording $f_{2i-1}(\vx) = x_i$ and $f_{2i}(\vx) = -x_i$. In this case, no multiplication is necessary. 

Moreover, to form pyramid shaped functions after the min-plus activation, we just need a linear transformation $\mbbR^d \to \mbbR^{d+1}$  instead of $\mbbR^d \to \mbbR^{2d}$ as in in the proof of Theorem~\ref{T:UniApprox}, i.e., only $d+1$ hyperplanes need to be generated from the linear layer instead of $2d$ hyperplanes as shown previously  (for example, we may let $f_i(\vx)=Kx_i$ for $i=1,\cdots,d$ and $f_{d+1}=-K(x_1+\cdots+x_d)$). This number can be further reduced in real applications.

On the other hand, as a tradeoff,  if more multiplications are introduced in the, then the approximating function generated by  the network can be ``smoother''.

The above discussion of Type II networks as function approximators also implies that to train a Type II network, one may either fix the linear layer with preassigned linear transformations and only train the min-plus and max-plus layers (in this case, each component of the network output is a Lipschitz  function whose Lipschitz constants have an upper bound which is determined by the preassigned linear transformation), or train the linear layer together with the min-plus and max-plus layers (in this case, each component of the network output can approximate any continuous function by Corollary~\ref{C:UniApprox}).

\subsection{Type III networks and Linear-Min-Max-Linear networks}
Note that a Type III network is a Type II network attached with an additional linear layer to the output end. Then the fact that Type II networks are equivalent to LMM networks (Theorem~\ref{T:TypeII-LMM}) implies that Type III networks are equivalent to\emph{ Linear-Min-Max-Linear (LMML) networks} which are LMM networks attached with an additional linear layer to the output end. 

For a set $V$ of vectors, denote the linear span of $V$ by $\Span (V):=\lbrace\sum_{i=1}^kc_i\vv_i\mid k\in\mbbN, c_i\in\mbbR, \vv_i\in V\rbrace$. Let $\Lambda_d$ be the space of all linear transformations on $\mbbR^d$. Then the space of all functions expressed by Type II networks with $d$ input nodes is $\Expr_d^{\text{\Romannum{2}}}:=\bigcup_{\lambda\in \Lambda_d}S_\lambda$, and the space of all functions expressed by Type III networks with $d$ input nodes is $\Expr_d^{\text{\Romannum{3}}}:=\Span\left(\Expr_d^{\text{\Romannum{2}}}\right)$.

It is clear that Type III networks are universal approximators of continuous functions, since Type II networks are universal approximators of continuous functions. On the other hand, compared to most conventional neural networks with simple fixed nonlinear activation functions, one may consider Type III networks as conventional networks with the nonlinear activation part being enlarged and trainable. Note that the composition of min-plus and max-plus layers afford a powerful nonlinear expression capability, which by itself afford Type II networks (not containing the output linear layer as Type III networks) capability of universal approximation of continuous functions. One may expect that the more sophisticated nonlinear expressor in a Type III network enhances its overall fitting capability as compared to conventional networks of similar scale.

\section{Training MMP-NNs} \label{S:train}
In this section, we first formulate the backpropagation algorithm for training the three types of layers of MMP-NNs, and then introduce a special technique called ``normalization''/``restricted normalization'' by which the parameters in the nonlinear part of MMP-NNs (entries of the min-plus and max-plus matrices) are properly adjusted which counters the effect of parameter deviation which can commonly happen to the nonlinear part of  MMP-NNs and might seriously affect the convergence of training. In particular, the normalization technique can be considered as a generalization of  is inspired by the  the Legendre-Fenchel transformation or convex conjugate which is widely used in physics and convex optimization. 

\subsection{Backpropagation} \label{SS:Backprop}
As for the conventional feedforward neural networks, we can use backpropagation to train MMP-NNs. 

In the process of backpropagation, the calculation methods of parameter gradients of min-plus layers and max-plus layers is different from that of linear layers. We summarize the gradient calculations as follows. 

\begin{enumerate}[(i)]
\item For the linear layers, consider linear functions
$$y_i = \sum_{j=1}^{n} w_{i j} \cdot x_j$$
where $n$ is the dimension of input.

From the forward calculation formula of linear layer, the gradient calculation formula of $w$ can be directly obtained as follows:
$$
\Delta w_{i j}=\frac{\partial y_{i}}{\partial w_{i j}} \cdot \Delta y_{i}= x_{j } \cdot \Delta y_{i}
$$
where $\Delta y_{i}$ is the loss of the output $y_i$.

\item As for the min-plus layers, the forward calculation formula can be described as:

$$
y'_i = \min\left(w'_{i 1} + x'_1,...,w'_{i j} + x'_j,...,w'_{i n} + x'_{n}\right).
$$

And we get the gradient calculation formula of $w$ as the following:

$$
\Delta w'_{i j}=\frac{\partial y'_{i}}{\partial w'_{i j}} \cdot \Delta y'_{i}=\left\{\begin{array}{cc}
\Delta y'_{i} & , (1) \\
0 & , (2)
\end{array}\right.
$$

(1) when the term of $w'_{i j}$ is the smallest term in the forward propagation used to calculate $y'_i$.

(2) otherwise.

\item Analogously, the forward calculation formula of the max-plus layers can be described as:

$$
y''_i = \max\left(w''_{i 1} + x'_1,...,w''_{i j} + x''_j,...,w''_{i n} + x''_{n}\right).
$$

And the gradient calculation formula:

$$
\Delta w''_{i j}=\frac{\partial y''_{i}}{\partial w''_{i j}} \cdot \Delta y''_{i}=\left\{\begin{array}{cc}
\Delta y''_{i} & , (3) \\
0 & , (4)
\end{array}\right.
$$

(3) when the term of $w''_{i j}$ is the biggest term in the forward propagation used to calculate $y''_i$.

(4) otherwise.

\end{enumerate}

\subsection{Normalization and restricted normalization} \label{SS:normalization}

Consider a min-plus transformation $\alpha: \mbbR^n\to \mbbR^m$ and a max-plus transformation $\beta: \mbbR^n\to \mbbR^m$ represented by $\alpha(\vx) = A\underline{\otimes} \vx$ and $\beta(\vx) = B\overline{\otimes} \vx$ respectively where $\vx\in \mbbR^n$, $A=(a_{ij})_{ij}\in\mbbR_{\min}^{m\times n}$ and $B=(b_{ij})_{ij}\in \mbbR_{\max}^{m\times n}$. Let $f_1,\cdots,f_n$ be real-valued functions on the same domain $X$. Then for $i=1,\cdots,m$, the min-plus combinations $g_i=a_{i1}\odot f_1\minplus\cdots\minplus a_{in}\odot f_n=\min\left(a_{i1}+f_1,\cdots,a_{in}+f_n\right)$ and the max-plus combinations $h_i=b_{i1}\odot f_1\maxplus\cdots\maxplus b_{in}\odot f_n=\max\left(b_{i1}+f_1,\cdots,b_{in}+f_n\right)$ are also real-valued functions on $X$ (Figure~\ref{F:min-max-transform}). (In this sense, $\alpha$ and $\beta$ can also be considered respectively as the min-plus and max-plus transformations of the vector $(f_1,\cdots,f_n)$ of functions on $X$.)

Consider, for example, $g_i=\min\left(a_{i1}+f_1,\cdots,a_{in}+f_n\right)$. Note that for a specific $x\in X$, $g_i(x)$ must take the same value as $a_{ij}+f_j(x)$ for at least one of the $j$'s in $\{1,\cdots,n\}$. On the other hand, however, it is possible that for some $f_j$, $ \inf_{x\in X}\left((a_{ij}+f_j(x))-g_i(x)\right)>0$ which means that the graph of the function $a_{ij}+f_j$ is strictly above the graph of the function $g_i$ by a margin. If such a ``detachment'' of the functions $a_{ij}+f_j$ and $g_i$ happens, a small adjustment of the coefficient $a_{ij}$ in the training process describe in Subsection~\ref{SS:Backprop} will be noneffective which can seriously affects the convergence of training. 

To avoid such a  phenomenon of ``parameter deviation'' and expedite the convergence of training,  we introduce a normalization process of the parameters as described in below.

\begin{figure}
\centering
\begin{tikzpicture}[>=to,x=1cm,y=1cm, scale=1]

\begin{scope}[shift={(0cm, 0cm)}]
  \node[scale=1.2] at (-2.2,0.7) {$f_1$};
  \node[scale=1.2] at (-1.8,0.1) {$\vdots$};
  \node[scale=1.2] at (-2.2,-0.7) {$f_n$};
  \node[scale=1.2] at (2.2,0.7) {$g_1$};
  \node[scale=1.2] at (1.8,0.1) {$\vdots$};
  \node[scale=1.2] at (2.2,-0.7) {$g_m$};
  \node[scale=1.2] at (0,0) {$A\in\mbbR^{m\times n}_{\min}$};
  \draw[-,line width=.8pt] (-1.8,0.7) edge (-1.3,0.7);
  \draw[-,line width=.8pt] (-1.8,-0.7) edge (-1.3,-0.7);
  \draw[-,line width=.8pt] (1.3,0.7) edge (1.8,0.7);
  \draw[-,line width=.8pt] (1.3,-0.7) edge (1.8,-0.7);
  \draw[color=black] (-1.3,-1.2) rectangle (1.3,1.2) ;
\end{scope}

\begin{scope}[shift={(6cm, 0cm)}]
  \node[scale=1.2] at (-2.2,0.7) {$f_1$};
  \node[scale=1.2] at (-1.8,0.1) {$\vdots$};
  \node[scale=1.2] at (-2.2,-0.7) {$f_n$};
  \node[scale=1.2] at (2.2,0.7) {$h_1$};
  \node[scale=1.2] at (1.8,0.1) {$\vdots$};
  \node[scale=1.2] at (2.2,-0.7) {$h_m$};
  \node[scale=1.2] at (0,0) {$B\in\mbbR^{m\times n}_{\max}$};
  \draw[-,line width=.8pt] (-1.8,0.7) edge (-1.3,0.7);
  \draw[-,line width=.8pt] (-1.8,-0.7) edge (-1.3,-0.7);
  \draw[-,line width=.8pt] (1.3,0.7) edge (1.8,0.7);
  \draw[-,line width=.8pt] (1.3,-0.7) edge (1.8,-0.7);
  \draw[color=black] (-1.3,-1.2) rectangle (1.3,1.2) ;
\end{scope}

\end{tikzpicture}
\caption{Functions $f_1,\cdots,f_n$ are transformed to $g_1,\cdots,g_m$ by the min-plus matrix $A\in\mbbR^{m\times n}_{\min}$ and to $h_1,\cdots,h_m$ by the max-plus matrix $B\in\mbbR^{m\times n}_{\max}$.} \label{F:min-max-transform}
\end{figure}
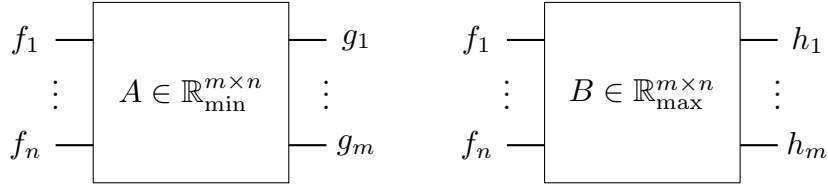

\begin{algorithm} \label{A:normalization}
\textbf{(Normalization)}
\begin{enumerate}[(1)]
\item \textbf{Min-plus normalization} 
\begin{enumerate}
\item[\textbf{Input:}] A min-plus matrix $A=(a_{ij})_{ij}\in\mbbR_{\min}^{m\times n}$ and functions $f_1,\cdots,f_n\in C(X)$.
\item[\textbf{Computation:}] 
For $i=1,\cdots,m$, let $g_i=a_{i1}\odot f_1\minplus\cdots\minplus a_{in}\odot f_n=\min\left(a_{i1}+f_1,\cdots,a_{in}+f_n\right)\in C(X)$.
For each $i=1,\cdots,m$ and $j=1,\cdots,n$, compute
$$\underline{\nu}(a_{ij})=-\inf_{x\in X}\left(f_j(x)-g_i(x)\right)=\sup_{x\in X}\left(g_i(x)-f_j(x)\right).$$
(In most of the cases we are interested, infimum and supremum can be replaced by minimum and maximum respectively.)

\item[\textbf{Output:}]  A min-plus matrix $\underline{\nu}(A):=(\underline{\nu}(a_{ij}))_{ij}\in \mbbR_{\min}^{m\times n}$.

\end{enumerate}
\item \textbf{Max-plus normalization}
\begin{enumerate}
\item[\textbf{Input:}] A max-plus matrix $B=(b_{ij})_{ij}\in\mbbR_{\max}^{m\times n}$ and functions $f_1,\cdots,f_n\in C(X)$
\item[\textbf{Computation:}] 
For $i=1,\cdots,m$, let $h_i=b_{i1}\odot f_1\maxplus\cdots\maxplus b_{in}\odot f_n=\max\left(b_{i1}+f_1,\cdots,b_{in}+f_n\right)\in C(X)$.
For each $i=1,\cdots,m$ and $j=1,\cdots,n$, compute
$$\overline{\nu}(b_{ij})=-\sup_{x\in X}\left(f_j(x)-h_i(x)\right)=\inf_{x\in X}\left(h_i(x)-f_j(x)\right).$$
(In most of the cases we are interested, infimum and supremum can be replaced by minimum and maximum respectively.)
\item[\textbf{Output:}] A max-plus matrix $\overline{\nu}(A):=(\overline{\nu}(b_{ij}))_{ij}\in \mbbR_{\max}^{m\times n}$.
\end{enumerate}
\end{enumerate}

\end{algorithm}

\begin{remark} \label{R:normalization}
We call $\underline{\nu}(a_{ij})$ the \emph{(min-plus) normalization of $a_{ij}$}, $\underline{\nu}(A)$ the \emph{(min-plus) normalization of $A$}, $\overline{\nu}(b_{ij})$ the \emph{(max-plus) normalization of $b_{ij}$}, and $\overline{\nu}(B)$ the \emph{(max-plus) normalization of $B$}. 
\end{remark}

\begin{proposition} \label{P:normalization}
Using the notations in Algorithm~\ref{A:normalization}, the normalization has the following properties:
\begin{enumerate}[(a)]
\item For each $i=1,\cdots,m$ and $j=1,\cdots,n$, $\inf_{x\in X}((\underline{\nu}(a_{ij})+f_j(x))-g_i(x))=0$ and $\sup_{x\in X}((\overline{\nu}(b_{ij})+f_j(x))-h_i(x))=0$.
\item $\underline{\nu}(a_{ij})\leq a_{ij}$ and $\overline{\nu}(b_{ij})\geq b_{ij}$.
\item  For each $i=1,\cdots,m$, $$g_i=\underline{\nu}(a_{i1})\odot f_1\minplus\cdots\minplus \underline{\nu}(a_{in})\odot f_n=\min\left(\underline{\nu}(a_{i1})+f_1,\cdots,\underline{\nu}(a_{in})+f_n\right)$$ and 
$$h_i=\overline{\nu}(b_{i1})\odot f_1\maxplus\cdots\maxplus \overline{\nu}(b_{in})\odot f_n=\max\left(\overline{\nu}(b_{i1})+f_1,\cdots,\overline{\nu}(b_{in})+f_n\right).$$
\end{enumerate}
\end{proposition}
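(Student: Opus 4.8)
The plan is to derive all three parts directly from the definitions of $g_i$, $h_i$ and the normalizations $\underline{\nu}(a_{ij}) = \sup_{x\in X}(g_i(x)-f_j(x))$ and $\overline{\nu}(b_{ij}) = \inf_{x\in X}(h_i(x)-f_j(x))$, using only elementary supremum/infimum manipulations. Since the min-plus and max-plus algebras are isomorphic under negation, the max-plus statements are in each case obtained from the min-plus ones by the obvious dualization, so I would write out the min-plus side and simply note that the max-plus side is symmetric. I would also present part (b) before part (a), as (b) is the most immediate and (c) depends on both.

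First I would establish part (b). Because $g_i(x)=\min(a_{i1}+f_1(x),\ldots,a_{in}+f_n(x))\leq a_{ij}+f_j(x)$ for every $x\in X$, rearranging gives $g_i(x)-f_j(x)\leq a_{ij}$ for all $x$; taking the supremum over $x$ yields $\underline{\nu}(a_{ij})\leq a_{ij}$. The dual inequality $\overline{\nu}(b_{ij})\geq b_{ij}$ follows from $h_i(x)\geq b_{ij}+f_j(x)$ by taking the infimum.

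Next, for part (a), I would argue pointwise using the defining supremum. By definition $\underline{\nu}(a_{ij})\geq g_i(x)-f_j(x)$ for every $x$, so $(\underline{\nu}(a_{ij})+f_j(x))-g_i(x)\geq 0$ holds pointwise and the infimum over $x$ is nonnegative. Conversely, since the supremum can be approached arbitrarily closely, there is a sequence $x_k$ with $g_i(x_k)-f_j(x_k)\to \underline{\nu}(a_{ij})$, which forces $(\underline{\nu}(a_{ij})+f_j(x_k))-g_i(x_k)\to 0$, so the infimum is at most $0$. Hence it equals $0$, and the max-plus identity is the symmetric statement.

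Finally, part (c) combines the two. Writing $\tilde g_i := \min(\underline{\nu}(a_{i1})+f_1,\ldots,\underline{\nu}(a_{in})+f_n)$, the pointwise bound $\underline{\nu}(a_{ij})+f_j(x)\geq g_i(x)$ from part (a) gives $\tilde g_i\geq g_i$, while $\underline{\nu}(a_{ij})\leq a_{ij}$ from part (b) gives $\underline{\nu}(a_{ij})+f_j(x)\leq a_{ij}+f_j(x)$ and hence $\tilde g_i\leq g_i$; together these force $\tilde g_i=g_i$, with the max-plus case analogous. I do not expect a serious obstacle, since everything reduces to sup/inf bookkeeping; the only point needing care is the attainment claim in part (a), where if $X$ is noncompact (so the supremum need not literally be achieved) one must argue via an approaching sequence rather than an extremizing point. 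The parenthetical remark in Algorithm~\ref{A:normalization} about replacing $\sup$/$\inf$ by $\max$/$\min$ is precisely the regime where attainment holds outright and the sequence argument collapses to evaluation at a single point.
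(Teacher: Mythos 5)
Your proof is correct and follows essentially the same sup/inf bookkeeping as the paper's own proof: the two pointwise bounds $g_i(x)\leq \underline{\nu}(a_{ij})+f_j(x)$ (from the definition of the supremum) and $\underline{\nu}(a_{ij})\leq a_{ij}$ are exactly the ingredients the paper combines for (a)--(c), with the max-plus case handled by the same appeal to duality. The only cosmetic difference is ordering: you prove (b) directly by taking the supremum of $g_i(x)-f_j(x)\leq a_{ij}$, whereas the paper first establishes (a) and then deduces (b) from it; both routes are valid and equally elementary.
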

\begin{proof}
Here we will only prove the case of min-plus normalization. The case of max-plus normalization can be proved analogously. 

For (a), since $\underline{\nu}(a_{ij})=\sup_{x\in X}\left(g_i(x)-f_j(x)\right)$, we must have that (1) $g_i(x)\leq \underline{\nu}(a_{ij})+f_j(x)$ for all $x\in X$ and (2) for each $\delta>0$, there exists $x\in X$ such that $g_i(x)+\delta \geq \underline{\nu}(a_{ij})+f_j(x)$. This means that $\inf_{x\in X}((\underline{\nu}(a_{ij})+f_j(x))-g_i(x))=0$.

For (b), note that $g_i=\min\left(a_{i1}+f_1,\cdots,a_{in}+f_n\right)$. Hence $g_i\leq a_{ij}+f_j$ which implies $\inf_{x\in X}((a_{ij}+f_j(x))-g_i(x))\geq 0$. Therefore,  we must have $\underline{\nu}(a_{ij})\leq a_{ij}$  by (a). 

For (c), we observe that $\min\left(\underline{\nu}(a_{i1})+f_1,\cdots,\underline{\nu}(a_{in})+f_n\right)\leq g_i=\min\left(a_{i1}+f_1,\cdots,a_{in}+f_n\right)$ by (b). On the other hand, for each $j$, we must have $g_i(x)\leq \underline{\nu}(a_{ij})+f_j(x)$ for all $x\in X$ since  $\underline{\nu}(a_{ij})=\sup_{x\in X}\left(g_i(x)-f_j(x)\right)$. Therefore, we have the identity $g_i=\min\left(\underline{\nu}(a_{i1})+f_1,\cdots,\underline{\nu}(a_{in})+f_n\right)$. 
\end{proof}

\begin{example} \label{E:normalization}

\begin{figure}
\centering
\begin{tikzpicture}[
  declare function={
    g(\x)= (\x<=-1) * (\x+2)   +
     and(\x>-1, \x<=1) * (\x*\x)     +
     (\x>1) * (-\x+2);
  }
]
\begin{scope} [shift={(0,0)}]

\node[scale=1.2] at (0,7) {(a)};

\begin{axis}[
	width=6.5cm,
	height=6.5cm,
    scale only axis,
    xmin=-2, xmax=2,
    xtick={-2,-1,0,1,2},ytick={0,1,2,3}, 
    xlabel = $x$,
    ymin=-0.5, ymax=3.5,
]

\addplot [
	line width=0.6pt,
    domain=-2:2, 
    samples=100, 
    color=red,
]
{x+2};
\addlegendentry{$f_1(x)+2=x+2$}

\addplot [
	line width=0.6pt,
    domain=-2:2, 
    samples=100, 
    color=green,
]
{-x+2};
\addlegendentry{$f_2(x)+2=-x+2$}

\addplot [
	line width=0.6pt,
    domain=-2:2, 
    samples=100, 
    color=yellow,
]
{x^2};
\addlegendentry{$f_3(x)=x^2 $}

\addplot [
	line width=0.6pt,
    domain=-2:2, 
    samples=100, 
    color=blue,
]
{1.5};
\addlegendentry{$f_4(x)+1.5=1.5$}

\addplot [
	line width=2.5pt,
	opacity=0.6,
    domain=-2:2, 
    samples=100, 
    color=black,
]
{g(x)};
\addlegendentry{$g(x)$}

\end{axis}
\end{scope}

\begin{scope} [shift={(8,0)}]
\node[scale=1.2] at (0,7) {(b)};

\begin{axis}[
	width=6.5cm,
	height=6.5cm,
    scale only axis,
    xmin=-2, xmax=2,
    xtick={-2,-1,0,1,2},ytick={0,1,2,3}, 
    xlabel = $x$,
    ymin=-0.5, ymax=3.5,
]

\addplot [
	line width=0.6pt,
    domain=-2:2, 
    samples=100, 
    color=red,
]
{x+2};
\addlegendentry{$f_1(x)+2=x+2$}

\addplot [
	line width=0.6pt,
    domain=-2:2, 
    samples=100, 
    color=green,
]
{-x+2};
\addlegendentry{$f_2(x)+2=-x+2$}

\addplot [
	line width=0.6pt,
    domain=-2:2, 
    samples=100, 
    color=yellow,
]
{x^2};
\addlegendentry{$f_3(x)=x^2 $}

\addplot [
	line width=0.6pt,
    domain=-2:2, 
    samples=100, 
    color=blue,
]
{1};
\addlegendentry{$f_4(x)+1=1$}

\addplot [
	line width=2.5pt,
	opacity=0.6,
    domain=-2:2, 
    samples=100, 
    color=black,
]
{g(x)};
\addlegendentry{$g(x)$}

\end{axis}
\end{scope}
\end{tikzpicture}
\caption{An example of one-dimensional min-plus normalization:  (a) A min-plus combination of functions $f_1(x)=x$, $f_2(x)=-x$, $f_3(x)=x^2$ and $f_4(x)=0$ as $g(x)=\min(f_1(x)+2,f_2(x)+2,f_3(x),f_4(x)+1.5)$; (b) After normalization, we can write $g(x)=\min(f_1(x)+2,f_2(x)+2,f_3(x),f_4(x)+1)$.}  \label{F:normalization}
\end{figure}
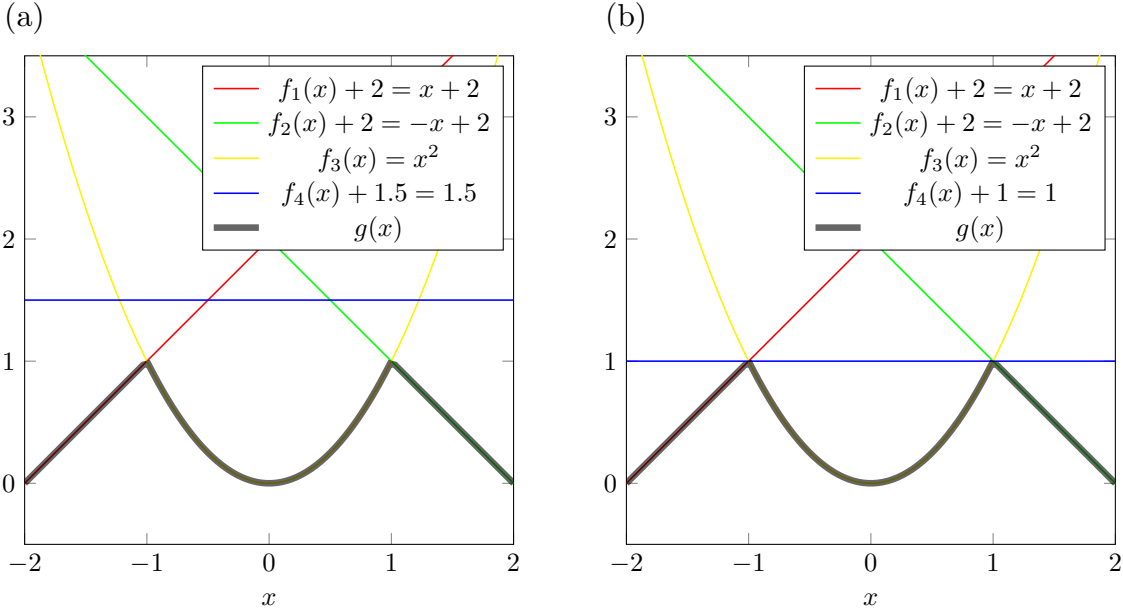
For  $X=\mbbR$, let $f_1(x)=x$, $f_2(x)=-x$, $f_3(x)=x^2$ and $f_4(x)=0$ be functions on $\mbbR$. Consider a function $g=2\odot f_1\minplus 2\odot f_2\minplus f_3\minplus 1.5\odot f_4 = \min(f_1+2,f_2+2,f_3,f_4+1.5)$ which is a min-plus combination of $f_1$, $f_2$, $f_3$ and $f_4$. As shown in Figure~\ref{F:normalization}(a), $g(x)=f_1(x)+2$ for $x\leq -1$, $g(x)=f_3(x)$ for $1\leq x\leq 1$ and $g(x)=f_2(x)+2$ for $x\geq 1$. However, $\min((f_4+1.5)-g)=0.5$, i.e., the whole graph of $f_4+1.5$ is detached from the graph of $g$. Running the min-plus normalization algorithm, we derive $\max(g-f_1)=\max(g-f_2)=2$, $\max(g-f_3)=0$, and $\max(g-f_4)=1$. Therefore, after normalization, we can write $g=\min(f_1+2,f_2+2,f_3,f_4+1)$ while the graphs of $f_1+2$, $f_2+2$, $f_3$ and $f_4+1$ are all attached to the graph of $g$ as shown in Figure~\ref{F:normalization}(b). 
\end{example}

\begin{remark}\label{R:legendre}
We emphasize here that if the functions $f_1,\cdots,f_n$ in Algorithm~\ref{A:normalization} are linear functions on a Euclidean space, then the normalization process described in Algorithm~\ref{A:normalization} is exactly the classical Legendre transformation applied to convex functions. On the other hand, there is no restriction of being linear on the functions $f_1,\cdots,f_n$ in our algorithm, while the normaliation process is generally true. In our previous work \cite{Luo2018}, we've developed a theory of tropical convexity analysis providing a more refined treatment of this general setting. 
\end{remark}

\begin{remark} \label{R:independence}
Recall that  a general MMP-NN of $K$ composite layer with  $d$  input nodes and $p$ output nodes of the MMP-NN  is essentially a function $\Phi: \mbbR^d\to \mbbR^p$ which can be written as $\Phi=\beta_K\circ \alpha_K\circ \lambda_K \circ \beta_{K-1} \circ \alpha_{K-1} \circ \lambda_{K-1}\circ \cdots \circ \beta_1\circ\alpha_1\circ\lambda_1$ where $\lambda_i$, $\alpha_i$ and $\beta_i$ are linear, min-plus and max-plus transformations respectively. In a training process, we may apply min-plus normalizations to the min-plus layers $\alpha_i$ and max-plus normalizations to the max-plus layers $\beta_i$ after steps of parameter tuning to improve and expedite convergence. In particular, for example, the input of $\alpha_i$ is a vector of functions on $\mbbR^d$ and the output of $\alpha_i$ is min-plus combinations of the input functions. If we apply min-plus normalization to $\alpha_i$, the parameters which are entries to the corresponding min-plus matrix will be adjusted. However, by Proposition~\ref{P:normalization}, this parameter adjustment won't affect the output functions. This means that the normalization process of one layer won't affect the results of other layers. The advantage of this layer-to-layer independence of normalization is that if we need to normalize one specific layer, there is no need to process the preceding layers first. 
\end{remark}

In Algorithm~\ref{A:normalization}, a practical issue of  computing $\underline{\nu}(a_{ij})$ and$\overline{\nu}(b_{ij})$ is that  we need to find the minimum/maximum of some functions on $X$ which is usually a domain in a Euclidean space. The computation of the minimum/maximum of a function can be extremely hard and time-consuming itself unless the input functions are simple functions and the domain $X$ is also simple of low dimension. In a MMP-NN, typically only the input functions of the first min-plus layer $\alpha_1$ are linear functions enabling us to find rigorous solutions of the minimum/maximum for layer $\alpha_1$ in a relatively easy manner. the input functions of the min-plus/max-plus layers following $\alpha_1$ become more untamed as the layer goes deeper. As a result, a direct min/max computation is a lot harder for these layers. 

To solve this problem, we propose a modified version of Algorithm~\ref{A:normalization} called restricted normalization (Algorithm~\ref{A:restricted-normalization}) which only deals with a finite subset $D$ of the domain $X$. In practice, $D$ can be simply chosen from the training set which tremendously simplifies the min/max computation in the normalization process.

\begin{algorithm}\label{A:restricted-normalization}
\textbf{(Restricted Normalization)}
\begin{enumerate}[(1)]
\item \textbf{Restricted min-plus normalization} 
\begin{enumerate}
\item[\textbf{Input:}] A min-plus matrix $A=(a_{ij})_{ij}\in\mbbR_{\min}^{m\times n}$, functions $f_1,\cdots,f_n\in C(X)$, and a finite subset $D\subseteq X$.
\item[\textbf{Computation:}] 
For $i=1,\cdots,m$, let $g_i=a_{i1}\odot f_1\minplus\cdots\minplus a_{in}\odot f_n=\min\left(a_{i1}+f_1,\cdots,a_{in}+f_n\right)\in C(X)$.
For each $i=1,\cdots,m$ and $j=1,\cdots,n$, compute
$$\underline{\nu}_D(a_{ij})=-\min_{x\in D}\left(f_j(x)-g_i(x)\right)=\max_{x\in D}\left(g_i(x)-f_j(x)\right).$$

\item[\textbf{Output:}]  A min-plus matrix $\underline{\nu}_D(A):=(\underline{\nu}_D(a_{ij}))_{ij}\in \mbbR_{\min}^{m\times n}$.

\end{enumerate}
\item \textbf{Restricted max-plus normalization}
\begin{enumerate}
\item[\textbf{Input:}] A max-plus matrix $B=(b_{ij})_{ij}\in\mbbR_{\max}^{m\times n}$ and functions $f_1,\cdots,f_n\in C(X)$,  and a subset $D\subseteq X$.
\item[\textbf{Computation:}] 
For $i=1,\cdots,m$, let $h_i=b_{i1}\odot f_1\maxplus\cdots\maxplus b_{in}\odot f_n=\max\left(b_{i1}+f_1,\cdots,b_{in}+f_n\right)\in C(X)$.
For each $i=1,\cdots,m$ and $j=1,\cdots,n$, compute
$$\overline{\nu}_D(a_{ij})=-\max_{x\in D}\left(f_j(x)-h_i(x)\right)=\min_{x\in D}\left(h_i(x)-f_j(x)\right).$$
\item[\textbf{Output:}] A max-plus matrix $\overline{\nu}_D(A):=(\overline{\nu}_D(b_{ij}))_{ij}\in \mbbR_{\max}^{m\times n}$.
\end{enumerate}
\end{enumerate}
\end{algorithm}

\begin{remark} \label{R:restricted-normalization}
We call $\underline{\nu}_D(a_{ij})$ the \emph{(min-plus) normalization of $a_{ij}$ restricted to $D$}, $\underline{\nu}_D(A)$ the \emph{(min-plus) normalization of $A$ restricted to $D$}, $\overline{\nu}_D(b_{ij})$ the \emph{(max-plus) normalization of $b_{ij}$ restricted to $D$}, and $\overline{\nu}_D(B)$ the \emph{(max-plus) normalization of $B$ restricted to $D$}. 
\end{remark}

\begin{proposition} \label{P:restricted-normalization}
Using the notations in Algorithm~\ref{A:normalization} and Algorithm~\ref{A:restricted-normalization},  for $i=1,\cdots,m$, let 
$$g'_i=\underline{\nu}_D(a_{i1})\odot f_1\minplus\cdots\minplus \underline{\nu}_D(a_{in})\odot f_n=\min\left(\underline{\nu}_D(a_{i1})+f_1,\cdots,\underline{\nu}_D(a_{in})+f_n\right)$$ 
and 
$$h'_i=\overline{\nu}_D(b_{i1})\odot f_1\maxplus\cdots\maxplus \overline{\nu}_D(b_{in})\odot f_n=\max\left(\overline{\nu}_D(b_{i1})+f_1,\cdots,\overline{\nu}_D(b_{in})+f_n\right).$$ The restricted normalization has the following properties:
\begin{enumerate}[(a)]
\item For each $i=1,\cdots,m$ and $j=1,\cdots,n$, $\min_{x\in X}((\underline{\nu}_D(a_{ij})+f_j(x))-g'_i(x))=\min_{x\in D}((\underline{\nu}_D(a_{ij})+f_j(x))-g'_i(x))=0$ and $\max_{x\in X}((\overline{\nu}_D(b_{ij})+f_j(x))-h'_i(x))=\max_{x\in D}((\overline{\nu}_D(b_{ij})+f_j(x))-h'_i(x))=0$.
\item $\underline{\nu}_D(a_{ij})\leq\underline{\nu}(a_{ij})\leq a_{ij}$ and $\overline{\nu}_D(b_{ij})\geq \overline{\nu}_D(b_{ij})\geq b_{ij}$.
\item  For each $i=1,\cdots,m$, we have $g'_i\leq g_i$ and $h'_i\geq h_i$ while $g'_i|_D=g_i|_D$ and  $h'_i|D=h_i|_D$.
\end{enumerate}
\end{proposition}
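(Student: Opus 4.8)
The plan is to prove the min-plus assertions only; the max-plus statements follow by the symmetric argument (interchange $\min\leftrightarrow\max$, $\sup\leftrightarrow\inf$, and reverse each inequality), exactly as in the proof of Proposition~\ref{P:normalization}. I fix indices $i$ and $j$ and write $\nu:=\underline{\nu}_D(a_{ij})=\max_{x\in D}(g_i(x)-f_j(x))$. Since $D$ is finite, this maximum is attained at some $x^\ast\in D$, so that $\nu+f_j(x^\ast)=g_i(x^\ast)$; attainment of this maximum is the one structural fact the whole argument rests on. Part (b) is then immediate: as $D\subseteq X$, the maximum over $D$ defining $\underline{\nu}_D(a_{ij})$ is bounded by the supremum over $X$ defining $\underline{\nu}(a_{ij})$, giving $\underline{\nu}_D(a_{ij})\leq\underline{\nu}(a_{ij})$, and the second inequality $\underline{\nu}(a_{ij})\leq a_{ij}$ is exactly Proposition~\ref{P:normalization}(b).

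Next I would prove (c). The global inequality $g'_i\leq g_i$ is termwise: by (b) we have $\underline{\nu}_D(a_{ik})+f_k\leq a_{ik}+f_k$ for each $k$, and taking the pointwise minimum over $k$ preserves the inequality, so $g'_i=\min_k(\underline{\nu}_D(a_{ik})+f_k)\leq\min_k(a_{ik}+f_k)=g_i$. For the reverse inequality on $D$, the key point is that for every $x\in D$ the defining maximum gives $\underline{\nu}_D(a_{ik})\geq g_i(x)-f_k(x)$, hence $\underline{\nu}_D(a_{ik})+f_k(x)\geq g_i(x)$ for each $k$; minimizing over $k$ yields $g'_i(x)\geq g_i(x)$ for all $x\in D$. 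Combined with $g'_i\leq g_i$ everywhere, this forces $g'_i|_D=g_i|_D$.

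Finally (a) follows from (c) plus the attainment at $x^\ast$. Because $g'_i=\min_k(\underline{\nu}_D(a_{ik})+f_k)$, each summand dominates the minimum, so $(\nu+f_j(x))-g'_i(x)\geq 0$ for every $x\in X$; thus both minima in (a) are nonnegative. At the maximizer $x^\ast\in D$ we have $\nu+f_j(x^\ast)=g_i(x^\ast)$ by definition of $\nu$ and $g_i(x^\ast)=g'_i(x^\ast)$ by (c), so the expression vanishes at $x^\ast$, giving $\min_{x\in D}=0$. Since $\min_{x\in X}\leq\min_{x\in D}=0$ while the expression is nonnegative on all of $X$, we also get $\min_{x\in X}=0$. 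The argument is elementary and I anticipate no genuine obstacle; the only delicate step is the equality $g'_i|_D=g_i|_D$ in (c), and it is precisely the finiteness of $D$ — guaranteeing the defining maximum is attained pointwise on $D$ rather than merely approached — that lets the touching point $x^\ast$ lie in $D\subseteq X$ and thereby makes the $X$-minimum and the $D$-minimum in (a) coincide at $0$.
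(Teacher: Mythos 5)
Your proof is correct and follows essentially the same route as the paper's: both arguments rest on the attainment of the defining maximum at some $x^\ast\in D$, the termwise monotonicity giving $g'_i\leq g_i$, the two-sided comparison forcing $g'_i|_D=g_i|_D$, and the nonnegativity of $(\underline{\nu}_D(a_{ij})+f_j)-g'_i$ on all of $X$ to upgrade the vanishing $D$-minimum to the $X$-minimum. The only cosmetic difference is that the paper derives $g'_i\leq g_i$ by passing through $\underline{\nu}_D(a_{ij})\leq\underline{\nu}(a_{ij})$ and Proposition~\ref{P:normalization}(c), whereas you compare directly with the original coefficients $a_{ij}$.
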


\begin{proof}
Here we will only prove the case of restricted min-plus normalization. The case of restricted max-plus normalization can be proved analogously. 

Consider all the functions restricted to $D$. Replacing $X$ by $D$, we can use the same argument as in the proof of Proposition~\ref{P:normalization} to show that $\min_{x\in D}((\underline{\nu}_D(a_{ij})+f_j(x))-g'_i(x))=0$, $\underline{\nu}_D(a_{ij})\leq a_{ij}$ and $g'_i|_D=g_i|_D$. 

To show that $\min_{x\in X}((\underline{\nu}_D(a_{ij})+f_j(x))-g'_i(x))=\min_{x\in D}((\underline{\nu}_D(a_{ij})+f_j(x))-g'_i(x))$, we note that by definition of $g'$, we have $g'_i(x)\leq \underline{\nu}_D(a_{ij})+f_j(x)$ for all $j$ and all $x\in X$, which means that $\min_{x\in X}((\underline{\nu}_D(a_{ij})+f_j(x))-g'_i(x))\geq 0$. On the other hand, the minimum can be achieved at some point $x$ in D. Hence the identity follows. 

It remains to show that $\underline{\nu}_D(a_{ij})\leq\underline{\nu}(a_{ij})$   and $g'_i\leq g_i$. By definition of $\underline{\nu}_D(a_{ij})$ and $\underline{\nu}(a_{ij})$, we have $\underline{\nu}_D(a_{ij})=\max_{x\in D}\left(g_i(x)-f_j(x)\right)\leq \sup_{x\in X}\left(g_i(x)-f_j(x)\right)=\underline{\nu}(a_{ij})$. Using Proposition~\ref{P:normalization}~(c), this also implies that $g'=\min\left(\underline{\nu}_D(a_{i1})+f_1,\cdots,\underline{\nu}_D(a_{in})+f_n\right)\leq \min\left(\underline{\nu}(a_{i1})+f_1,\cdots,\underline{\nu}(a_{in})+f_n\right)=g$. 

\end{proof}

\begin{example} \label{E:restricted-normalization}
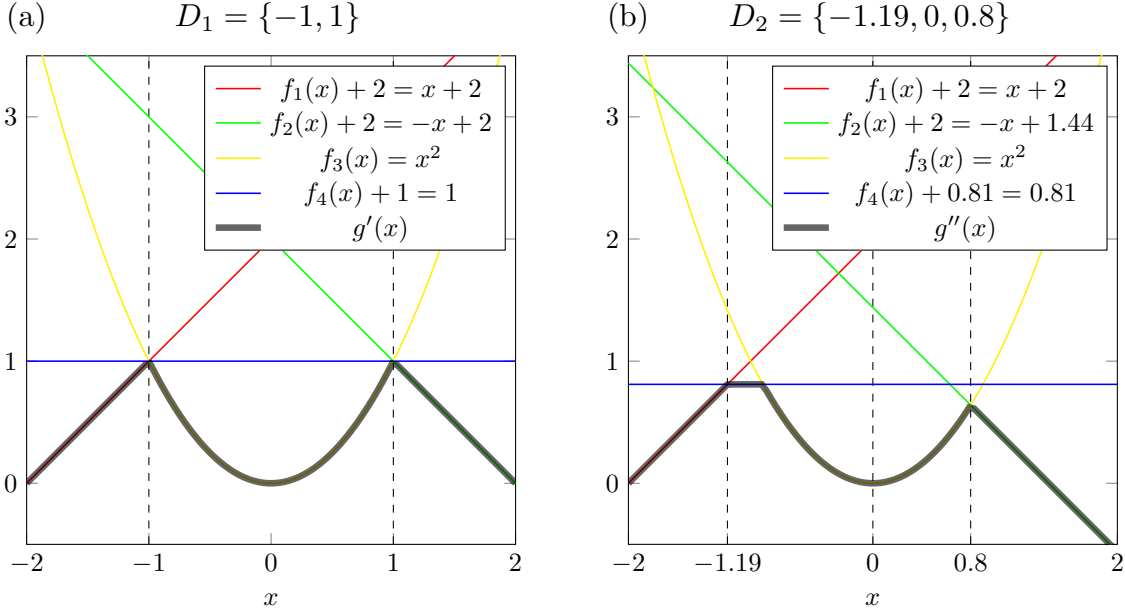
\begin{figure}
\centering
\begin{tikzpicture}[
  declare function={
    g(\x)= (\x<=-1) * (\x+2)   +
     and(\x>-1, \x<=1) * (\x*\x)     +
     (\x>1) * (-\x+2); 
    gg(\x)= (\x<=-1.19) * (\x+2)   + and(\x>-1.19, \x<=-0.9) * (0.81)  +
     and(\x>-0.9, \x<=0.8) * (\x*\x)     +
     (\x>0.8) * (-\x+1.44);           
  }
]

\begin{scope} [shift={(0,0)}]
\node[scale=1.2] at (0,7) {(a)};

\node[scale=1.2] at (3.2,7) {$D_1=\{-1,1\}$};

\begin{axis}[
	width=6.5cm,
	height=6.5cm,
    scale only axis,
    xmin=-2, xmax=2,
    xtick={-2,-1,0,1,2},ytick={0,1,2,3}, 
    xlabel = $x$,
    ymin=-0.5, ymax=3.5,
]

\addplot[
	dashed,
	forget plot,
    domain=-0.5:3.5,
    samples = 60,
]
({-1},
{x});

\addplot[
	dashed,
	forget plot,
    domain=-0.5:3.5,
    samples = 60,
]
({1},
{x});

\addplot [
	line width=0.6pt,
    domain=-2:2, 
    samples=100, 
    color=red,
]
{x+2};
\addlegendentry{$f_1(x)+2=x+2$}

\addplot [
	line width=0.6pt,
    domain=-2:2, 
    samples=100, 
    color=green,
]
{-x+2};
\addlegendentry{$f_2(x)+2=-x+2$}

\addplot [
	line width=0.6pt,
    domain=-2:2, 
    samples=100, 
    color=yellow,
]
{x^2};
\addlegendentry{$f_3(x)=x^2 $}

\addplot [
	line width=0.6pt,
    domain=-2:2, 
    samples=100, 
    color=blue,
]
{1};
\addlegendentry{$f_4(x)+1=1$}

\addplot [
	line width=2.5pt,
	opacity=0.6,
    domain=-2:2, 
    samples=100, 
    color=black,
]
{g(x)};
\addlegendentry{$g'(x)$}

\end{axis}
\end{scope}

\begin{scope} [shift={(8,0)}]

\node[scale=1.2] at (0,7) {(b)};

\node[scale=1.2] at (3.2,7) {$D_2=\{-1.19,0,0.8\}$};

\begin{axis}[
	width=6.5cm,
	height=6.5cm,
    scale only axis,
    xmin=-2, xmax=2,
    xtick={-2,-1.19,0,0.8,2},ytick={0,1,2,3}, 
    xlabel = $x$,
    ymin=-0.5, ymax=3.5,
]

\addplot[
	dashed,
	forget plot,
    domain=-0.5:3.5,
    samples = 60,
]
({-1.19},
{x});

\addplot[
	dashed,
	forget plot,
    domain=-0.5:3.5,
    samples = 60,
]
({0},
{x});

\addplot[
	dashed,
	forget plot,
    domain=-0.5:3.5,
    samples = 60,
]
({0.8},
{x});

\addplot [
	line width=0.6pt,
    domain=-2:2, 
    samples=100, 
    color=red,
]
{x+2};
\addlegendentry{$f_1(x)+2=x+2$}

\addplot [
	line width=0.6pt,
    domain=-2:2, 
    samples=100, 
    color=green,
]
{-x+1.44};
\addlegendentry{$f_2(x)+2=-x+1.44$}

\addplot [
	line width=0.6pt,
    domain=-2:2, 
    samples=100, 
    color=yellow,
]
{x^2};
\addlegendentry{$f_3(x)=x^2 $}

\addplot [
	line width=0.6pt,
    domain=-2:2, 
    samples=100, 
    color=blue,
]
{0.81};
\addlegendentry{$f_4(x)+0.81=0.81$}

\addplot [
	line width=2.5pt,
	opacity=0.6,
    domain=-2:2, 
    samples=100, 
    color=black,
]
{gg(x)};
\addlegendentry{$g''(x)$}

\end{axis}
\end{scope}
\end{tikzpicture}
\caption{Examples of restricted min-plus normalization of the min-plus combination $g(x)=\min(f_1(x)+2,f_2(x)+2,f_3(x),f_4(x)+1.5)$ of functions $f_1(x)=x$, $f_2(x)=-x$, $f_3(x)=x^2$  and $f_4(x)=0$ (as in Example~\ref{E:normalization} and Figure~\ref{F:normalization}).  (a) Normalization restricted to $D_1=\{-1,1\}$; (b) Normalization restricted to $D_2=\{-1.19,0,0.8\}$.} \label{F:restricted-normalization}
\end{figure}

Reconsider the min-plus combination $g(x)=\min(f_1+2,f_2+2,f_3,f_4(x)+1.5)$ of functions $f_1(x)=x$, $f_2(x)=-x$, $f_3(x)=x^2$  and $f_4(x)=0$ as in Example~\ref{E:normalization}. Consider two distinct finite sets $D_1=\{-1,1\}$ and $D_2=\{-1.19,0,0.8\}$. We will show that the results of the normalization restricted to $D_1$ and $D_2$ are distinct. 
\begin{enumerate}[(i)]
\item Running the min-plus normalization algorithm restricted to $D_1$, we derive $\max_{x\in D_1}(g(x)-f_1(x))=\max_{x\in D_1}(g(x)-f_2(x))=2$, $\max_{x\in D_1}(g(x)-f_3(x))=0$, and $\max_{x\in D_1}(g(x)-f_4(x))=1$. Therefore, after this restricted normalization, we get the function $g'=\min(f_1+2,f_2+2,f_3,f_4+1)$ which is exactly same as the function $g$ (Figure~\ref{F:restricted-normalization}(a)).
\item Running the min-plus normalization algorithm restricted to $D_2$, we derive $\max_{x\in D_2}(g(x)-f_1(x))=2$, $\max_{x\in D_2}(g(x)-f_2(x))=1.44$, $\max_{x\in D_2}(g(x)-f_3(x))=0$, and $\max_{x\in D_2}(g(x)-f_4(x))=0.81$. Therefore, after this restricted normalization, we get the function $g''=\min(f_1+2,f_2+1.44,f_3,f_4+0.81)$ which is different from the function $g$ (Figure~\ref{F:restricted-normalization}(b)). However, one can observe that $g''\leq g$ and restricted to $x\in D_2$, we still have $g(x)=g''(x)$ (see Proposition~\ref{A:restricted-normalization}(c)).
\end{enumerate}
\end{example}

\begin{remark}
We emphasize here that by Proposition~\ref{A:restricted-normalization}, even though the derived function $g'_i$ (resp. $h'_i$) is in general not identical to the functions $g_i$ (resp. $h_i$), we still have the agreement of $g'_i$ with $g_i$ (resp. $h'_i$ with $h_i$) restricted to $D$ as demonstrated in Example~\ref{E:restricted-normalization}, which also means that the layer-to-layer independence of normalization stated in Remark~\ref{R:independence} is still valid when restricted to $D$. This is actually satisfactory enough for most of the cases we are interested, since typically the set $D$ is chosen from the training set. 
\end{remark}

\bibliographystyle{unsrt}

\bibliography{ref}

\end{document}